\theoremstyle{nonumberplain}
\newtheorem{proof}{Proof}
\newtheorem{thm}{Theorem}
\newtheorem{lemma}{Lemma}
\begin{document}
\title{New insights on Multi-Solution Distribution of the P3P Problem}
\author{Bo Wang\\
Institute of Automation, Chinese Academy of Sciences \\
{\tt\small bo.wang@ia.ac.cn}
\and
Hao Hu \\
Ming Hsieh Department of Electrical Engineering, University of Southern California\\
{\tt\small huhao@usc.edu}
\and
Caixia Zhang \\
College of Science, North China University of Technology\\
{\tt\small zhangcx@ncut.edu.cn}
}
\maketitle
\begin{abstract}
Traditionally, the P3P problem is solved by firstly transforming its 3 quadratic equations into a quartic one, then by locating the roots of the resulting quartic equation and verifying whether a root does really correspond to a true solution of the P3P problem itself. However, a root of the quartic equation does not always correspond to a solution of the P3P problem .
 In this work, we show that when the optical center is outside of all the 6 toroids defined by the control point triangle, each positive root of the Grunert's quartic equation must correspond to a true solution of the P3P problem, and the corresponding P3P problem cannot have a unique solution, it must have either 2 positive solutions or 4 positive solutions. In addition, we show that when the optical center passes through any one of the 3 toroids among these 6 toroids ( except possibly for two concentric circles) , the number of the solutions of the corresponding P3P problem always changes by 1, either increased by 1 or decreased by 1.Furthermore we show that such changed solutions always locate in a small neighborhood of control points, hence the 3 toroids are critical surfaces of the P3P problem and the 3 control points are 3 singular points of solutions. A notable example is that when the optical center passes through the outer surface of the union of the 6 toroids from the outside to inside, the number of the solutions must always decrease by 1. Our results are the first to give an explicit and geometrically intuitive relationship between the P3P solutions and the roots of its quartic equation. It could act as some theoretical guidance for P3P practitioners to properly arrange their control points to avoid undesirable solutions.
\end{abstract}
\section{Introduction}
The Perspective-3-Point Problem, or P3P problem, is a single-view based pose estimation method. It was first introduced by Grunert \cite{Grunert1841} in 1841, and popularized in computer vision community a century later by mainly the Fishler and Bolles’ work in 1981\cite{RANSAC1981}. The P3P problem requires the least number of points to have a finite number of solutions and no feature-matching across views is needed. It has been widely used in various fields (\cite{high-precision2006},\cite{Kneip2011A},\cite{SHIQI2011A},\cite{Linnainmaa1988Pose},\cite{Lowe1991},\cite{Nister2007},\cite{Quan1999},\cite{Tang2009}). For its minimal demand in restricted working enrironment and compuational efficiency in RANSAC framework, the P3P problem is preferred due to its minimum requirement.
The P3P problem can be defined as: Given the perspective projections of three control points with known coordinates in the world system and a calibrated camera, to determine the position and orientation of the camera in the world system. It is shown that\cite{Haralick1994} the P3P problem could have 1,2,3 or at most 4 solutions depending on the configuration between the optical center and its 3 control points.
Since in any real applications, some basic questions must be answered before any real implementation, such as does it has a unique solution? If not, how many solutions could it have? Is the solution stable? etc., the multiple solution phenomenon in the P3P problem has been a focus of investigation since its very inception in the literature. Traditionally the multi-solution phenomenon in P3P problem is analyzed by at first transforming its 3 quadratic constraints into a quartic equation, then multiple roots of the quartic equation are located to derive possible solutions. For example, Haralick et al summarized 6 different transformation methods\cite{Haralick1994} . Gao \etal~ \cite{Gaoxiaoshan2003} gave a complete solution classification in an algebraic way. Gao \etal~ \cite{Gaoxiaoshan2006} also gave an analysis on the solutions distribution in a probabilistic way. Recently, Rieck (\cite{Rieck2012JMIV},\cite{Rieck2012VISAPP},\cite{Rieck2014}) gave a systematic analysis on the multi-solution phenomenon via his introduced novel algebraic entities, in particular on the distribution of multiple solutions around the danger cylinder and their stability problem. From a geometric way, Zhang and Hu\cite{cylinder2006} showed that when the optical center lies on the danger cylinder, the P3P problem must have 4 solutions, one is a double solution, and when the optical center lies on the three vertical planes, the P3P problem must have a pair of side-sharing solutions and a pair of point-sharing solutions \cite{four2005}; Sun and Wang \cite{sunfengmei2010} gave an interpretation of the solution changes at the intersecting lines of the three vertical planes with the danger cylinder. Wu and Hu \cite{wuyihong2006} gave a thorough investigation on the degenerate cases.
Although many results are reported in these literature, some key questions still persist: for example, it is well known that a root of the resulting quartic equation does not necessarily correspond to a solution of the P3P problem itself. It could in fact correspond to 2, 1 solutions, even no solution at all. In other words, a one-to-one relationship between a root and a solution does not exist in general case, but under what conditions does some definitive relation exist? It is still an open question up to now.Another essential question is: what is the relationship between the number of solutions and the position of optical center in geometric view? In this work, we try to give answers to the above questions, our main contributions are 3-fold:\\
\indent (1). We show that given 3 control points, when the optical center is outside of the 6 toroids, the corresponding P3P problem cannot have a unique solution, it can only have either 2 solutions or 4 solutions.\\
\indent (2). We show that given 3 control points, when the optical center is outside of the 6 toroids, each positive root of the Grunert’s quartic equation must correspond to a positive solution of the P3P problem, or in this restricted case, a one-to-one relationship does exist between a root and a solution, and our result seems the first in the literature to establish such a one-to-one relationship. In addition, this conclusion holds also for the other 5 different methods by Haralick\cite{Haralick1994}.\\
\indent (3). We show that given 3 control points , the 6 toroids can be divided into two groups, when the optical center passes through a toroid in one group, the number of the solutions of the corresponding P3P problem must change exactly by one, either increased by one, or decreased by one. However when the optical center passes through a toroid in the other group, the number of the solutions of the corresponding problem does not change, but its S-solution( see the main text for its definition) changes from one to another. This result shows that the 3 toroids must act as some kinds of critical surfaces in term of the solution distribution of the P3P problem.\\
\indent The above three results provide some new insights into the nature of the multi-solution phenomenon in P3P problem, and could be also of theoretical guidance to P3P practitioners in addition to their academic values.
The paper is organized as follows: The next section is some preliminaries as well as some new concepts, including the supplementary P3P problem and supplementary-solution, or S-solution; The main results are reported in Section 3, and Section 4 is some concluding remarks.
\hfill

\section{Preliminaries}
For the notational convenience, some preliminaries on the P3P problem are provided at first.
\subsection{P3P Problem and Its Corresponding Supplementary Problems}
\begin{figure}[t]
\begin{center}
\includegraphics[width=0.3\linewidth]{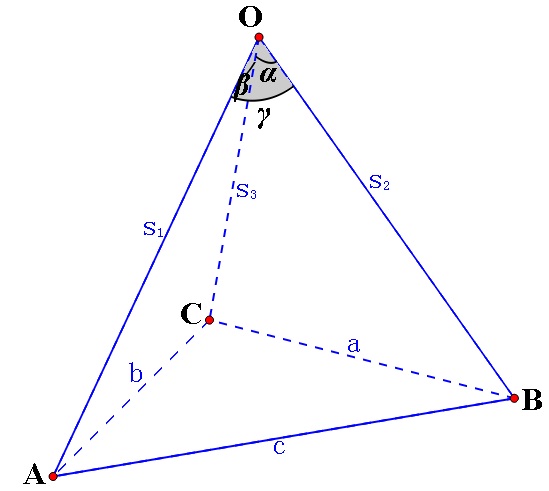}
\end{center}
   \caption{P3P problem definition: $A,B,C$ are the 3 control points, $O$ is the optical center.}
\label{Fig1}
\end{figure}
As shown in Fig.1, $A,B,C$ are the three control points with known distance $a=|BC|$, $b=|AC|$, $c=|AB|$, $O$ is the optical center. Since the camera ( under the pinhole model) is calibrated, the three subtended angles $\alpha,\beta,\gamma$ of the projection rays can be considered known entities, then by the Law of Cosines, the following 3 basic constraints on the three unknown  $s_1=|OA|,s_2=|OB|,s_3=|OC|$ in (1) must hold.
\begin{gather}\label{equ1}
  \begin{cases}
    s_1^2 +s_2^2 -2cos(\gamma)s_1s_2 =c^2 \\
    s_1^2 +s_3^2 -2cos(\beta)s_1s_3 =b^2  \\
    s_2^2 +s_3^2 -2cos(\alpha)s_2s_3 =a^2
  \end{cases}
\end{gather}
\indent Hence the P3P problem is meant to determine such positive triplets $(s_1,s_2,s_3)$ satisfying the 3 basic constraints in (\ref{equ1}).\\
\indent Given a P3P problem in (\ref{equ1}), we can define its $(\alpha,\beta)$ \textbf{supplementary problem} in (\ref{equ2}):
\begin{gather}\label{equ2}
  \begin{cases}
    s_1^2 +s_2^2 -2cos(\gamma)s_1s_2& =c^2 \\
    s_1^2 +s_3^2 -2cos(\pi-\beta)s_1s_3 &=b^2  \\
    s_2^2 +s_3^2 -2cos(\pi-\alpha)s_2s_3 &=a^2
  \end{cases}
\end{gather}
\indent Since the constraint system (\ref{equ2}) is obtained by replacing  $\alpha$ and $\beta$ in (\ref{equ1}) by their respective supplementary angle ($\pi-\alpha$) and ($\pi-\beta$), the corresponding problem is called the $(\alpha,\beta)$ supplementary problem of the original P3P problem in this work. Similarly, the $(\beta,\gamma)$ supplementary problem, and  $(\alpha,\gamma)$ supplementary problem can be defined.
\subsection{Solution and S-Solution of the P3P Problem}
A P3P solution is a positive triplet ($s_1>0,s_2>0,s_3>0 $) satisfying the 3 constraints in (\ref{equ1}) since any distance must be positive. Note that if a real-valued triplet($s_1,s_2,s_3 $) satisfying (\ref{equ1}) and if $s_1\neq 0,s_2\neq0,s_3\neq0 $, the signs of $s_1,s_2,s_3 $ could be:\\
(1): all three positive; (2): all three negative; (3): two positive and one negative; (4): two negative and one positive.

\indent In addition, since if a triplet ($s_1,s_2,s_3 $) satisfies (\ref{equ1}), ($-s_1,-s_2,-s_3 $) must also satisfy (\ref{equ1}), we need only consider those triplets with all three positive elements or two positive elements + one negative element.
  As we have said, a positive triplet satisfying (\ref{equ1}) is a solution of the P3P problem. For a triplet ($s_1,s_2,s_3 $) with one negative element + two positive elements and satisfying (\ref{equ1}), say ($s_1>0,s_2>0,s_3<0 $), its positive counterpart ($s_1,s_2,-s_3 $) must be a solution of its $(\alpha,\beta)$ supplementary problem (\ref{equ2}), hereinafter, such a triplet ($s_1,s_2,s_3 $) is called a \textbf{supplementary solution}, in short, a \textbf{S-solution}, of the original P3P problem.
  In sum, a non-zero-real-valued triplet satisfying (\ref{equ1}) corresponds to either a solution or a S-solution of the original P3P problem.
\subsection{The 6 Toroids}
\begin{figure}[t]
\begin{center}
\includegraphics[width=0.3\linewidth]{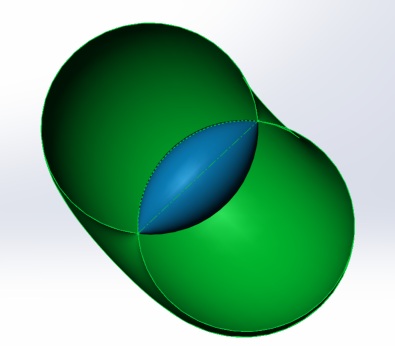}
\includegraphics[width=0.3\linewidth]{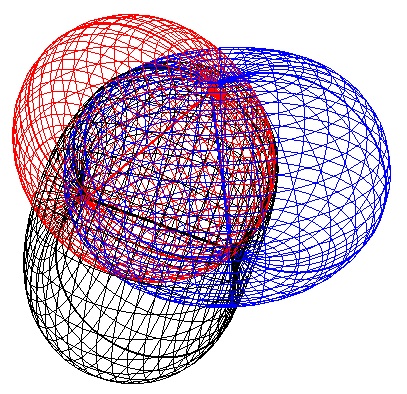}
\end{center}
   \caption{By rotating the circumcircle of the triangle ABC around one of its three sides separately, 3 pairs of toroids can be generated;Left: two toroids($T_{\theta},T_{\pi-\theta}$); Right: Three toroids.}
\label{Fig2}
\end{figure}
As shown in Fig.2, by rotating the circumcircle of the triangle $ABC$ around one of its three sides separately, 3 pairs of toroids can be generated \cite{Wolfe1991}, denoted as ($T_{\angle A},T_{\pi-\angle A}$),($T_{\angle B},T_{\pi-\angle B}$),($T_{\angle C},T_{\pi-\angle C}$)(see their definitions in the next section). In section 3 we will show that these 6 toroids play a very important role for the multiple solution phenomenon in the P3P problem.
\subsection{Grunert's Quartic Equation and Its Possible Root Distributions}
As shown in \cite{Haralick1994}, by setting $s_2=us_1,s_3=vs_1$ and by some algebraic manipulations, a quartic equation in $v$ can be derived from (\ref{equ1}) as:
\begin{equation}\label{Grunert’s Quartic Equation}
  A_4v^4+A_3v^3+A_2v^2+A_1v+A_0=0
\end{equation}
where the first and last coefficients have the following form:
\begin{gather}
  A_4=4c^2/b^2(cos^2\angle A-cos^2\alpha)  \\ 
  A_0=4a^2/b^2(cos^2\angle C-cos^2\gamma)  \label{A0}
\end{gather}
the above (4) indicates that if the optical center $O$ is not on the toroid pair ($T_{\angle A},T_{\pi-\angle A}$), $A_4\neq 0$. If $O$ is outside of their union, then $\alpha <min(\angle A,\pi-\angle A),A_4<0$. Otherwise if $O$ is inside of their union, $ min(\angle A,\pi-\angle A)<\alpha<max(\angle A,\pi-\angle A),A_4>0$.
Similarly, (\ref{A0}) indicates that if the optical center $O$ is not on toroid pair ($T_{\angle C},T_{\pi-\angle C}$), $A_0\neq 0$. If $O$ is outside of their union, then $\gamma <min(\angle C,\pi-\angle C),A_0<0$. Otherwise if $O$ is inside of their union, $ min(\angle C,\pi-\angle C)<\gamma<max(\angle C,\pi-\angle C),A_0>0$.
Clearly if the optical center lies on either outside of the union of the 4 toroids  ($T_{\angle A},T_{\pi-\angle A}$),($T_{\angle C},T_{\pi-\angle C}$), or inside of their intersection, $A_4\times A_0>0$ always hold, which in turn implies that the quartic equation in (\ref{Grunert’s Quartic Equation}) can have either two real roots + a pair of complex roots, or 4 real roots, including possible repeated roots.

\subsection{Notational Definitions}
 For the notational convenience, we first introduce some definitions.
\boldmath
\begin{itemize}
  \item ${T_{\angle A}(T_{\pi-\angle A},T_{\alpha},T_{\pi-\alpha})}$: The toroid generated by rotating the circular arc with the fixed inscribed angle of $\angle A(\pi-\angle A, \alpha,\pi-\alpha)$ around the circular segment BC by $2\pi$; Similarly,there are ${T_{\angle B}(T_{\pi-\angle B},T_{\beta},T_{\pi- \beta})}$, ${T_{\angle C}(T_{\pi-\angle C},T_{\gamma},T_{\pi- \gamma})}$.
  \item ${Ball_{ABC}}$: The circumsphere of the P3P problem, defined as the sphere such that the circumcircle of the P3P problem is a great circle of the sphere.
  \item $\overline{S}$: The set of the points inside and on the closed surface $S$.
  \item ${\overline{T_{union}}}$: The union of the 6 point sets:$T_{\angle A}$, $T_{\pi-\angle A}$, $T_{\angle B}$, $T_{\pi-\angle B}$, $T_{\angle C}$, $T_{\pi-\angle C}$.
  \item ${n_{T_{\angle A}}}$: The outer normal vector of the toroid $T_{\angle A}$ at point $A$ ;
  \item ${L_{\alpha,\beta}}$: The intersecting curve of the two toroids: $T_\alpha$ and $T_\beta$.
\end{itemize}

\section{Main Results}
 Some new results are introduced which include five theorems. They are some geometric conditions on distribution of a unique solution and changes of the numbers of solutions.

%
\subsection{The Role of $\overline{T_{union}}$ to the Number of the P3P Multi-Solution}
In the following theorem 1 and 2, we show that when the optical center locates outside of Tunion, the P3P problem cannot have a unique solutiosn, and each positive root of the Grunert's derivation correspond to a solution of the P3P problem.
\begin{thm}\label{no unique}
  If the optical center $O$ is outside of $\overline{T_{union}}$, the corresponding P3P problem cannot have a unique solution, it must have at least two solutions.
\end{thm}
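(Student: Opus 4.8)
The plan is to read off the sign information the hypothesis forces on the coefficients of Grunert's quartic, count its real roots, and then promote each positive root to a genuine positive solution. Being outside $\overline{T_{union}}$ means, in particular, that $O$ is outside the union of the toroids $(T_{\angle A},T_{\pi-\angle A})$ and $(T_{\angle C},T_{\pi-\angle C})$, so by (4)–(5) we have $A_4<0$ and $A_0<0$; carrying out the same elimination with the sides relabelled (equivalently, forming the quartic in $u$) and using that $O$ is also outside $(T_{\angle B},T_{\pi-\angle B})$ gives in addition $\alpha<\angle A$, $\beta<\angle B$, $\gamma<\angle C$. Since $A_4A_0>0$, the quartic $f(v)=A_4v^4+A_3v^3+A_2v^2+A_1v+A_0$ has an even number of real roots, counted with multiplicity.

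Next I would locate two of these roots on the positive axis. The genuine optical center supplies the all-positive triple $(|OA|,|OB|,|OC|)$ solving (1), so $v_O=|OC|/|OA|>0$ is a root of $f$; hence $f$ has at least one, therefore at least two, real roots. Moreover $f(0)=A_0<0$ and $f(v)\to-\infty$ as $v\to+\infty$ (because $A_4<0$), while $f(v_O)=0$ with $v_O>0$; so $f$ must attain a non-negative interior maximum on $(0,\infty)$, and hence has at least two roots there, again counted with multiplicity.

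It then remains to show that every positive root $v$ of $f$ corresponds to a solution of the P3P problem, not merely to a real triple. Two of the three coordinates are recovered harmlessly: $s_1^{2}=b^{2}/\big((v-\cos\beta)^{2}+\sin^{2}\beta\big)>0$, so $s_1>0$ is real and $s_3=vs_1>0$; and eliminating $u^{2}$ between the two quadratics obtained from (1) expresses $u$ as the real rational function $u(v)=g(v)/\big(2b^{2}(v\cos\alpha-\cos\gamma)\big)$ with $g(v)=(b^{2}+c^{2}-a^{2})v^{2}-2(c^{2}-a^{2})\cos\beta\,v+(c^{2}-a^{2}-b^{2})$. The hard part will be proving $u(v)>0$, equivalently $s_2>0$: this is exactly the claim that, under our hypothesis, none of the three supplementary problems in the sense of (2) is solvable, since a positive root with $u(v)<0$, or any negative real root, would be an S-solution rather than a solution. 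I would attack this by tracking the sign of $g(v)/(v\cos\alpha-\cos\gamma)$ at the roots of $f$, using the Law of Cosines identities $b^{2}+c^{2}-a^{2}=2bc\cos\angle A$ and $c^{2}-a^{2}-b^{2}=-2ab\cos\angle C$ together with $\alpha<\angle A$, $\beta<\angle B$, $\gamma<\angle C$ to force numerator and denominator to share a sign; alternatively, by the geometric observation that a solution of a supplementary problem would have to lie simultaneously strictly inside two of the three toroid pairs and strictly outside the third, which the six toroids do not allow.

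Granting this, the theorem follows at once: the at-least-two positive roots of $f$ give at least two P3P solutions, so a unique solution is impossible; and since no real root is then negative, the number of solutions equals the number of real roots of $f$, which is even and non-zero, hence $2$ or $4$.
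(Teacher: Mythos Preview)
Your overall architecture coincides with the paper's: use the coefficient signs $A_4<0$, $A_0<0$ to force an even number of real roots of Grunert's quartic, then argue that under the hypothesis every real root yields a genuine solution (no S-solutions), so the count is $2$ or $4$. Your additional observation that $f(0)<0$, $f(v_O)=0$, $f(+\infty)=-\infty$ pins down at least two \emph{positive} roots is a pleasant sharpening the paper does not state explicitly, though it is not strictly needed once S-solutions are ruled out.

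The genuine gap is exactly where you flag it: you do not establish that $u(v)>0$, i.e.\ that no S-solution exists. Your proposed algebraic attack---tracking the sign of $g(v)/(v\cos\alpha-\cos\gamma)$ using $\alpha<\angle A$, $\beta<\angle B$, $\gamma<\angle C$---is not carried out, and there is no evident reason those angle inequalities alone should synchronize the signs of numerator and denominator at every root of $f$; the roots of $g$ and the zero of $v\cos\alpha-\cos\gamma$ need not interlace nicely with the roots of $f$. Your geometric alternative (``inside two pairs, outside the third, which the six toroids do not allow'') is the right reformulation of what an S-solution would mean, but the final clause is an unproved assertion, and in fact it is a stronger claim than is needed. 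The paper's resolution (its Lemma~2) supplies the missing idea: introduce the circumsphere $Ball_{ABC}$ as an intermediary and prove the nesting
\[
\overline{T_{\pi-\beta}}\ \subset\ \overline{Ball_{ABC}}\ \subset\ \overline{T_{\alpha}},
\]
the first inclusion because $\pi-\beta>\max(\angle B,\pi-\angle B)$ and the second because $\alpha<\min(\angle A,\pi-\angle A)$ (Lemma~1 in the paper). Since $\overline{T_{\pi-\beta}}\cap Ball_{ABC}=\{A,C\}$ and $\overline{Ball_{ABC}}\cap T_{\alpha}=\{B,C\}$, the surfaces $T_{\pi-\beta}$ and $T_{\alpha}$ meet only at $C$; likewise $T_{\pi-\gamma}$ and $T_{\alpha}$ meet only at $B$. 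Hence $T_{\alpha}\cap T_{\pi-\beta}\cap T_{\pi-\gamma}=\varnothing$, which is precisely the nonexistence of the supplementary solution you need. With this lemma in hand, your outline becomes a complete proof, identical in substance to the paper's.
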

Before giving a formal proof of Theorem 1, we introduce at first the following three lemmas:
\begin{lemma}\label{Ball}
  Assume $P$ is an arbitrary point on the circumsphere, $Ball_{ABC}$, of the P3P problem ,then the subtended angles of P to the three control points ($A,B,C$), $\angle APC$  must satisfy the following condition:
  $ min(\angle B,\pi-\angle B)<\angle APC <max(\angle B,\pi-\angle B)$.
\end{lemma}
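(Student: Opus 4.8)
The plan is to work with the circumsphere $Ball_{ABC}$ and the classical inscribed-angle relationship on a sphere. Recall that $\angle APC$ is the angle subtended at $P$ by the segment $AC$, whose length is $b=|AC|$. Since $A$, $C$ are two fixed points on the circumsphere and the triangle's circumcircle is a great circle, the chord $AC$ divides $Ball_{ABC}$ into two spherical caps. For any point $P$ on the sphere, the locus of points seeing $AC$ under a fixed angle is itself a circle on the sphere (the intersection of the sphere with one of the toroids $T_{\angle APC}$, $T_{\pi-\angle APC}$ erected over $AC$), so as $P$ ranges over the sphere the angle $\angle APC$ takes a continuous range of values. The first step is therefore to identify the extreme values of $\angle APC$ as $P$ varies over $Ball_{ABC}$.

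First I would set up coordinates so that the circumcircle of $ABC$ lies in a plane through the center of $Ball_{ABC}$, with $A$ and $C$ symmetric about the axis through the midpoint of $AC$. Then I would parametrize $P$ on the sphere and express $\cos\angle APC$ via the Law of Cosines in triangle $APC$, namely $\cos\angle APC = (|PA|^2+|PC|^2-b^2)/(2|PA||PC|)$, and show monotonicity of this quantity as $P$ moves along a meridian from the plane of the circumcircle toward the poles of the axis $AC$. The key observation is that the two extreme positions of $P$ are exactly on the circumcircle itself: one on the arc of the circumcircle on the same side as $B$ (giving $\angle APC = \angle B$, by the inscribed angle theorem in the circle) and the other on the complementary arc (giving $\angle APC = \pi - \angle B$). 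Every other point of the sphere gives an angle strictly between these two, because moving $P$ off the great circle plane strictly changes the inscribed angle in one direction.

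A cleaner alternative I would pursue in parallel: use the fact that the set of all points in space seeing $AC$ under a fixed angle $\theta$ is precisely the toroid $T_\theta$ (or $T_{\pi-\theta}$) obtained by rotating the corresponding circular arc about line $AC$, and that distinct values of $\theta \in (0,\pi)$ give nested/disjoint toroids foliating all of space minus line $AC$. The circumsphere $Ball_{ABC}$ is tangent internally to the extreme toroids $T_{\angle B}$ and $T_{\pi-\angle B}$ along the circumcircle (these toroids are generated by rotating the very arcs of the circumcircle cut off by $AC$), and lies strictly between them otherwise; hence every $P$ on the sphere other than on the circumcircle lies on a toroid $T_\theta$ with $\theta$ strictly between $\angle B$ and $\pi-\angle B$, which is exactly the claimed inequality $\min(\angle B,\pi-\angle B) < \angle APC < \max(\angle B,\pi-\angle B)$.

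The main obstacle I anticipate is establishing rigorously that the circumsphere is ``sandwiched'' between the two extreme toroids $T_{\angle B}$ and $T_{\pi-\angle B}$ and touches each only along the circumcircle — i.e., that no point of $Ball_{ABC}$ escapes the slab $\{\theta: \min(\angle B,\pi-\angle B)\le \theta\le \max(\angle B,\pi-\angle B)\}$ of toroids. This amounts to a monotonicity/convexity argument about how $\angle APC$ varies over the sphere, which I would handle either by the explicit meridian computation of $\cos\angle APC$ sketched above or by a rotational-symmetry argument reducing to a planar problem: intersect everything with a half-plane bounded by line $AC$, where the circumsphere becomes a circular arc and the toroids become circular arcs through $A$ and $C$, and invoke the planar inscribed-angle theorem. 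The boundary cases — $P = A$ or $P = C$, where $\angle APC$ is undefined, and the degenerate case where $\angle B = \pi/2$ so the two bounds coincide — would need to be noted as excluded or trivial, and I would state the lemma with the implicit assumption $P \neq A, C$.
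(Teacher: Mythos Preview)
Your approaches are correct but considerably more elaborate than the paper's. The paper's proof is essentially two lines: since $A$, $P$, $C$ all lie on $Ball_{ABC}$, the circumcircle of triangle $APC$ is a (small) circle on this sphere, so its radius $R_{APC}$ is at most the radius of any great circle---in particular at most the circumradius $R_{ABC}$ of triangle $ABC$, which is a great circle by the definition of $Ball_{ABC}$. Applying the law of sines to both triangles on the common side $b=|AC|$ gives $\sin(\angle APC)=b/(2R_{APC})\ge b/(2R_{ABC})=\sin(\angle B)$, which is exactly the inequality $\min(\angle B,\pi-\angle B)\le\angle APC\le\max(\angle B,\pi-\angle B)$, strict when $P$ is off the great circle.

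Your second (``toroid sandwich'') approach, once you pass to the plane through $A$, $P$, $C$, essentially rediscovers this: the planar inscribed-angle comparison you want to invoke \emph{is} the law-of-sines/circumradius comparison above. But you frame it as a global claim that the sphere is trapped between $T_{\angle B}$ and $T_{\pi-\angle B}$ with tangency only along the circumcircle, which then requires the very monotonicity/convexity check you flag as the main obstacle; the paper's pointwise formulation avoids all of that. Your first (coordinates and meridian monotonicity) route would also work but is overkill. One terminological caution: the sphere is \emph{not} rotationally symmetric about line $AC$ unless $\angle B=\pi/2$, so ``rotational-symmetry argument'' is not the right description of the reduction---what actually does the work is simply passing to the circumcircle of $APC$ and comparing its radius to that of the great circle.
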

\begin{proof}
  Since the circumcircle of the triangle $ABC$ is a great circle of $Ball_{ABC}$, its radius must be no less than that of the circumcircle of the triangle $APC$. As shown in Fig.\ref{no solution}, for any point on the circumcircle of the triangle $APC$, the above inequalities must hold.
\end{proof}
 \begin{lemma}\label{no S-solution}
   If the optical center $O$ is outside of $\overline{T_{union}}$, the corresponding P3P problem has no S-solutions.
 \end{lemma}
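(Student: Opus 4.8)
The plan is to argue by contradiction, converting the existence of an S-solution into a geometric impossibility on the circumsphere $Ball_{ABC}$; Lemma~\ref{Ball} and two close variants of it will supply all the angle estimates. First I would record what the hypothesis means for the subtended angles. As in the discussion of the coefficients $A_4$ and $A_0$ above (together with the analogous statement for the toroid pair $T_{\angle B},T_{\pi-\angle B}$), if $O$ lies outside $\overline{T_{union}}$ then it lies outside each of the three toroid pairs, so
\[
\alpha<\min(\angle A,\pi-\angle A),\quad \beta<\min(\angle B,\pi-\angle B),\quad \gamma<\min(\angle C,\pi-\angle C),
\]
equivalently $\pi-\alpha>\max(\angle A,\pi-\angle A)$, and likewise for $\pi-\beta$ and $\pi-\gamma$.

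Suppose now that the corresponding P3P problem had an S-solution. By the sign trichotomy there are three cases, according to which of the three supplementary problems the S-solution solves, and since the whole configuration is symmetric under permuting $A,B,C$ it suffices to treat one of them, say an S-solution $(s_1,s_2,s_3)$ with $s_3<0$; then $(s_1,s_2,-s_3)$ is a positive solution of the $(\alpha,\beta)$-supplementary problem. Such a solution is realized by an optical center $O'$ in space (a genuine point, with $C$ lying behind it), and a direct computation from (\ref{equ1}) shows that
\[
\angle AO'B=\gamma,\qquad \angle AO'C=\pi-\beta,\qquad \angle BO'C=\pi-\alpha.
\]

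I would then pin down the position of $O'$ relative to $Ball_{ABC}$ using two facts, of which Lemma~\ref{Ball} is the ``on the sphere'' case: for any point $P$ and any labelling $\{X,Y,Z\}=\{A,B,C\}$, (i) if $P$ is strictly inside $Ball_{ABC}$ then $\angle XPY>\min(\angle Z,\pi-\angle Z)$, and (ii) if $P$ is on or outside $Ball_{ABC}$ then $\angle XPY\le\max(\angle Z,\pi-\angle Z)$. Each is proved just as Lemma~\ref{Ball}: cut the plane through $X,P,Y$ with the sphere to get a circle through $X$ and $Y$ whose radius is at most the circumradius of $\triangle ABC$, decide on which side of that circle $P$ falls, and compare inscribed angles via the law of sines in the circumcircle. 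Applying (ii) with $(X,Y,Z)=(A,C,B)$: since $\angle AO'C=\pi-\beta>\max(\angle B,\pi-\angle B)$, the point $O'$ is neither on nor outside $Ball_{ABC}$, hence lies strictly inside it. Applying (i) with $(X,Y,Z)=(A,B,C)$: since $\angle AO'B=\gamma<\min(\angle C,\pi-\angle C)$, the point $O'$ does not lie strictly inside $Ball_{ABC}$. These two conclusions are incompatible, so an S-solution of this type cannot exist, and the remaining two sign patterns are excluded by the identical argument after relabelling.

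The step I expect to be the real work is establishing (i) and (ii) --- that is, showing that $Ball_{ABC}$ cleanly separates, for each chord of $\triangle ABC$, the region where that chord is subtended at an angle exceeding $\max(\angle Z,\pi-\angle Z)$ from the region where it is subtended at an angle at most that value; one has to be careful near the three control points and in identifying which arc of the auxiliary circle the point $P$ faces. A smaller additional point is to confirm that a positive solution of each supplementary problem is genuinely realized by a point $O'$ in space (the relevant configurations being non-degenerate), so that Lemma~\ref{Ball} and its variants may be applied to $O'$.
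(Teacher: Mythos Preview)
Your argument is correct and follows the same idea as the paper's proof: both assume an S-solution exists, realise it as a supplementary optical center $O'$, and use the circumsphere $Ball_{ABC}$ together with the angle bounds coming from $O\notin\overline{T_{union}}$ to derive a contradiction. The paper packages this as the nesting $\overline{T_{\pi-\beta}}\subset\overline{Ball_{ABC}}\subset\overline{T_{\alpha}}$ and then tracks the shared boundary points $A,B,C$ to show that the three supplementary toroids have no common point, whereas you reach the contradiction more directly from only two of the three angle conditions via your facts (i) and (ii); this is a mild streamlining of the same geometric separation argument.
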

\begin{figure}[t]
\begin{center}
\includegraphics[width=0.3\linewidth]{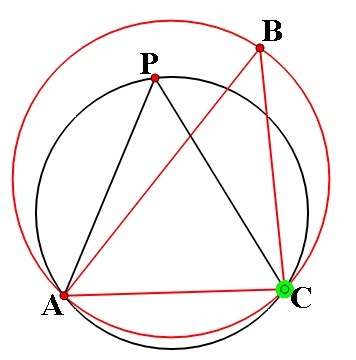}
\includegraphics[width=0.3\linewidth]{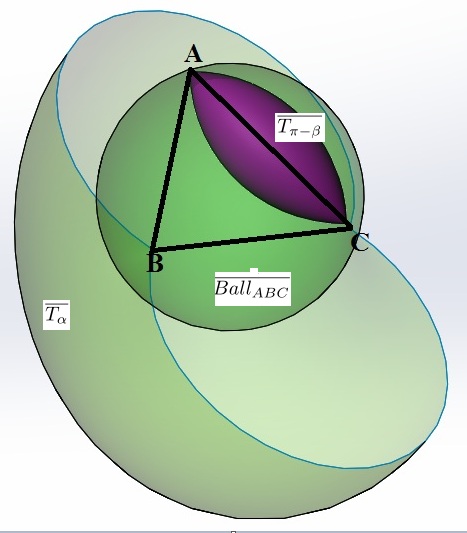}
\end{center}
   \caption{Left: $\angle$ $APC$ and $\angle$ $B$,$\pi-$ $\angle$ $B$, Right:$\overline{T_{\pi-\beta}}\subset \overline{Ball_{ABC}} \subset \overline{T_{\alpha}}$.}
\label{no solution}
\end{figure}
\begin{proof}
  Proofs by Contradiction: Suppose the optical center $O_1$ is outside of $\overline{T_{union}}$, and the corresponding P3P problem did have a S-solution.
  Without loss of generality, assume the S-solution is ($s_1<0,s_2>0,s_3>0 $), by definition of S-solution, the triplet ($s_1,s_2,s_3 $) must simultaneously satisfy the following two equivalent constraint systems in (\ref{equ3}) and (\ref{equ4}):
  \begin{gather}\label{equ3}
  \begin{cases}
    s_1^2 +s_2^2 -2cos(\gamma)s_1s_2 =c^2 \\
    s_1^2 +s_3^2 -2cos(\beta)s_1s_3 =b^2  \\
    s_2^2 +s_3^2 -2cos(\alpha)s_2s_3 =a^2
  \end{cases}
\end{gather}
\begin{gather}\label{equ4}
  \begin{cases}
    s_1^2 +s_2^2 -2cos(\pi-\gamma)(-s_1)s_2& =c^2 \\
    s_1^2 +s_3^2 -2cos(\pi-\beta)(-s_1)s_3 &=b^2  \\
    s_2^2 +s_3^2 -2cos(\alpha)s_2s_3 &=a^2
  \end{cases}
\end{gather}
Eq.(\ref{equ4}) indicates that the three toroids, $T_{\pi-\gamma}$, $T_{\pi-\beta}$ and $T_{\alpha}$ must at least intersect at one point which corresponds to ($-s_1,s_2,s_3$) as a solution of the supplementary problem . In the following, we show that such a common intersecting point does not exist, hence a S-solution does not exist.
Suppose $P$ is an arbitrary point on the circumsphere $Ball_{ABC}$ of the P3P problem, according to Lemma \ref{Ball}, $ min(\angle B,\pi-\angle B)<\angle APC <max(\angle B,\pi-\angle B)$.Since $O_1$ is outside of $\overline{T_{union}}$, we have:
\begin{gather*}
   \pi-\beta>max(\angle B,\pi-\angle B),
   \pi-\gamma>max(\angle C,\pi-\angle C)
\end{gather*}
Since $\pi-\beta>max(\angle B,\pi-\angle B)$,we have $\overline{T_{\pi-\beta}}\subset \overline{Ball_{ABC}}$, as shown in Fig.\ref{no solution}. Similarly since $ \alpha<min(\angle A,\pi-\angle A)$, we have $\overline{Ball_{ABC}}\subset \overline{T_{\alpha}}$. In addition, since $\overline{T_{\pi-\beta}}$ and $ \overline{Ball_{ABC}}$ have only two common points $A$ and $C$, and $\overline{Ball_{ABC}}$ and $ \overline{T_{\alpha}}$ have only two common points $B$ and $C$, by $\overline{T_{\pi-\beta}}\subset \overline{Ball_{ABC}} \subset \overline{T_{\alpha}}$, $\overline{T_{\pi-\beta}} $ and $ \overline{T_{\alpha}}$ have only a single common intersecting point $C$.
Similarly we can prove that $\overline{T_{\pi-\gamma}} $ and $ \overline{T_{\alpha}}$ can only have a single intersecting point at B.Hence the three toroids ($ \overline{T_{\alpha}}$,$\overline{T_{\pi-\beta}} $ ,$\overline{T_{\pi-\gamma}} $) has no real intersecting point. In other words, system (\ref{equ4}) has no any positive solution, or a S-solution of the P3P problem in (\ref{equ3}).
\end{proof}
\begin{lemma}\label{Grunert real}
\item[(1)] Given a \emph{real} root of the Grunert's quartic equation in Eq.(\ref{Grunert’s Quartic Equation}) in Section 2.4, there always exists a corresponding \emph{real} triplet($s_1,s_2,s_3$) satisfying (\ref{equ1});
\item[(2)] If $s_1=0$ (or $s_2=0$, or $s_3=0$) in the above triplet ($s_1,s_2,s_3$), then the corresponding optical center must lie on one of the toroid pair ($T_{\angle A}$, $T_{\pi-\angle A}$)(or ($T_{\angle B}$, $T_{\pi-\angle B}$), or ($T_{\angle C}$, $T_{\pi-\angle C}$));
\item[(3)] If $s_1\neq 0$,$s_2\neq 0$ and $s_3 \neq 0$, then the triplet ($s_1,s_2,s_3$)  must be either a solution or a S-solution of the P3P problem.
\end{lemma}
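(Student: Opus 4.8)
The plan is to establish the three items in order; all the work sits in item~(1), after which (2) and (3) follow quickly. For item~(1) I would \emph{reverse} the elimination that produces the Grunert quartic. Setting $s_2=us_1$, $s_3=vs_1$ and clearing $s_1^2$ from the first-and-second, and from the second-and-third, constraints of (\ref{equ1}) yields two relations $P_1(u,v)=0$ and $P_2(u,v)=0$, each quadratic in $u$ with the common leading coefficient $b^2$; hence $P_1-P_2$ is linear in $u$, of the form $2b^2(v\cos\alpha-\cos\gamma)\,u=N(v)$, and substituting the resulting $u=u(v)$ into $P_1=0$ and clearing the denominator is exactly what produces (\ref{Grunert’s Quartic Equation}). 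So, given a real root $v$ with $v\cos\alpha\neq\cos\gamma$, set $u:=N(v)/\big(2b^2(v\cos\alpha-\cos\gamma)\big)\in\mathbb{R}$. Because $1+u^2-2u\cos\gamma=(u-\cos\gamma)^2+\sin^2\gamma>0$ (here $\gamma\in(0,\pi)$, so $\sin\gamma\neq0$), the quantity $s_1^2:=c^2/(1+u^2-2u\cos\gamma)$ is a strictly positive real; put $s_1:=\sqrt{s_1^2}>0$, $s_2:=us_1$, $s_3:=vs_1$. The first constraint of (\ref{equ1}) holds by construction; then $P_1=0$ gives $s_1^2(1+v^2-2v\cos\beta)=b^2$, the second constraint, and $P_2=0$ gives $s_1^2(u^2+v^2-2uv\cos\alpha)=a^2$, the third. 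Thus $(s_1,s_2,s_3)$ is a real triplet satisfying (\ref{equ1}).

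The main obstacle is the degenerate case $v\cos\alpha=\cos\gamma$, where $u$ cannot be recovered from the linear relation. Here I would argue: if $Q(v)$ denotes the left side of (\ref{Grunert’s Quartic Equation}), then $Q(v)$ equals $b^2N(v)^2$ plus a multiple of $(v\cos\alpha-\cos\gamma)$, so a root with $v\cos\alpha=\cos\gamma$ must also satisfy $N(v)=0$; but then $P_1(\cdot,v)$ and $P_2(\cdot,v)$ coincide as quadratics in $u$, and it remains to produce a \emph{real} root of this common quadratic, i.e.\ to verify that its discriminant $4b^2\big(c^2(1+v^2-2v\cos\beta)-b^2\sin^2\gamma\big)$ is nonnegative. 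This is the genuinely delicate step; I expect to close it either by exploiting that $\alpha,\beta,\gamma$ are the face angles at $O$ of an actual tetrahedron $OABC$ (so the trihedral inequalities among them pin down the relevant $v$), or by a limiting argument approaching such a $v$ along the real locus of (\ref{Grunert’s Quartic Equation}) through non-degenerate values, where the construction above already supplies real triplets.

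For item~(2), suppose the triplet of item~(1) has a vanishing entry; by the symmetry of the three constraints take $s_3=0$. Substituting into (\ref{equ1}) gives $s_1^2=b^2$ and $s_2^2=a^2$, so the first constraint reads $a^2+b^2\mp2ab\cos\gamma=c^2$; comparing with the Law of Cosines $c^2=a^2+b^2-2ab\cos\angle C$ in triangle $ABC$ forces $\cos\gamma=\pm\cos\angle C$, i.e.\ $\gamma\in\{\angle C,\pi-\angle C\}$. Since $\gamma=\angle AOB$ is the angle subtended at $O$ by $AB$, this says precisely that $O$ lies on $T_{\angle C}$ or on $T_{\pi-\angle C}$; a vanishing $s_1$ or $s_2$ is handled identically and yields the pairs $(T_{\angle A},T_{\pi-\angle A})$ and $(T_{\angle B},T_{\pi-\angle B})$ respectively.

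For item~(3), if $s_1,s_2,s_3$ are all nonzero then, since $(-s_1,-s_2,-s_3)$ also satisfies (\ref{equ1}) and the number $k\in\{0,1,2,3\}$ of negative entries obeys $\min(k,3-k)\le1$, we may replace the triplet by its negative if necessary so that at most one entry is negative. If none is negative the triplet is, by definition, a solution of the P3P problem; if exactly one is negative it is, by the definition in Section~2.2, an S-solution. Hence the only real content of the lemma beyond bookkeeping is the discriminant check in the degenerate back-substitution of item~(1), which I regard as the crux.
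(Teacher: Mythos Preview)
Your approach matches the paper's: items (2) and (3) are argued identically (the paper runs the $s_1=0$ case rather than your $s_3=0$, but the computation is the same), and for item (1) the paper simply asserts that it ``can be derived directly'' from the Grunert elimination in \cite{Haralick1994} and omits all details. Your explicit back-substitution is therefore strictly more careful than what the paper supplies; in particular the degenerate case $v\cos\alpha=\cos\gamma$ that you flag as the crux is not addressed at all in the paper's proof, so on that point you are already ahead of it rather than behind.
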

\begin{proof}
 ~\item[(1)]From the derivation process of the Grunert's quartic equation (\ref{Grunert’s Quartic Equation}) in \cite{Haralick1994}, the first part can be derived directly. The details are omitted;
 ~\item[(2)] If $s_1=0$, then by Eq.(\ref{equ1}),$s_2=\pm c,s_3=\pm b$ and $c^2 +b^2 -2cos(\alpha)(\pm c) (\pm b) =a^2$, hence $cos(\alpha)=\pm cos\angle A$. That is, the optical center should lie on a curve on one of the toroids ($T_{\angle A}$, $T_{\pi-\angle A}$). Similarly when $s_2=0$ (or $s_3=0$), the optical center should lie on ($T_{\angle B}$, $T_{\pi-\angle B}$)( or ($T_{\angle C}$, $T_{\pi-\angle C}$));
 ~\item[(3)]As shown in Section 2.4, if the triplet ($s_1,s_2,s_3$) satisfies the constraint system (\ref{equ1}), and $s_1\neq 0$,$s_2\neq 0$ and $s_3 \neq 0$, then either all the three elements in ($s_1,s_2,s_3$) are positive, in this case it is a solution of the P3P problem, or one element is negative and the other two are positive, in this case, it is a S-solution of the P3P problem.
\end{proof}
\textbf{Proof of Theorem 1}
\begin{proof}
As shown in Section 2.4, given a P3P problem, its Grunert’s quartic equation can have either two distinct real roots + a pair of complex roots or 4 real roots, hence when the optical center $O$ is outside of $\overline{T_{union}}$, by (2) of Lemma \ref{Grunert real}, $s_1 \neq 0,s_2 \neq 0,s_3 \neq 0$ , furthermore, by (1) of Lemma \ref{Grunert real}, the sum of the number of solutions and S-solutions (including repeated solution) of the P3P problem can only be 2 or 4. Then By Lemma \ref{no S-solution}, S-solution is impossible, hence the number of the solutions of the P3P problem must be either 2 or 4. That is, a unique solution is impossible.
\end{proof}
\textbf{Remark 1}
A direct consequence of Theorem \ref{no unique} is that from a statistic point of view, the probability of a P3P problem having a unique solution is zero. This is because by removing the scale factor, the 3 control points of any P3P problem can be considered lying on a unit circle. Given 3 control points on the unit circle, by Theorem \ref{no unique}, the unique solution can only occur when the optical center is inside of or on $\overline{T_{union}}$. Since the 3D space within $T_{union}$ is a fixed one, and the space outside of $\overline{T_{union}}$ is infinitely large, the probability of the P3P problem having a unique solution must be zero.
\begin{thm}\label{one to one}
  If the optical center O is outside of $\overline{T_{union}}$, under the Grunert’s derivation, each positive root of the quartic equation must correspond to a solution of the P3P problem.
\end{thm}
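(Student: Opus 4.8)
The plan is to obtain the conclusion almost directly from Lemmas \ref{no S-solution} and \ref{Grunert real} together with the root-count discussion of Section 2.4; the only genuine work is the sign bookkeeping that ties a \emph{positive} root $v$ to an honest positive triplet. First I would fix a positive root $v$ of the Grunert quartic. By part (1) of Lemma \ref{Grunert real}, the Grunert derivation attaches to $v$ a real triplet $(s_1,s_2,s_3)$ satisfying the basic system (\ref{equ1}), and since that derivation is set up through the substitutions $s_2=us_1$, $s_3=vs_1$, this associated triplet satisfies $s_3=vs_1$. Since $O$ lies outside $\overline{T_{union}}$ it lies on none of the six toroids, so the contrapositive of part (2) of Lemma \ref{Grunert real} forces $s_1\neq 0$, $s_2\neq 0$, $s_3\neq 0$. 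Part (3) of Lemma \ref{Grunert real} then says this triplet is either a solution or an S-solution of the P3P problem, and Lemma \ref{no S-solution} excludes the S-solution case because $O$ is outside $\overline{T_{union}}$. Hence $(s_1,s_2,s_3)$ is, up to the overall sign ambiguity $(s_1,s_2,s_3)\leftrightarrow(-s_1,-s_2,-s_3)$, a genuine positive solution of (\ref{equ1}).

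It remains to pin down the sign. Let $(\sigma_1,\sigma_2,\sigma_3)=\pm(s_1,s_2,s_3)$ be the representative with all three entries positive. Because $s_3=vs_1$ with $v>0$ and $s_1\neq 0$, the entries $s_1$ and $s_3$ carry the same sign, so the single overall flip that makes $s_1$ positive also makes $s_3$ positive; consequently $\sigma_3=v\sigma_1$, i.e. the positive solution $(\sigma_1,\sigma_2,\sigma_3)$ is precisely the triplet that the root $v$ encodes through $s_3=vs_1$. This is exactly the assertion of the theorem. For completeness I would also record the converse direction: a solution $(\sigma_1,\sigma_2,\sigma_3)$ produces the root $v=\sigma_3/\sigma_1>0$, and since the quartic has only $2$ or $4$ real roots (Section 2.4) while $A_0\neq 0$ rules out the root $v=0$, every real root is in fact positive and is matched to a solution — which simultaneously re-proves Theorem \ref{no unique} and yields the claimed one-to-one correspondence.

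I expect the only delicate point to be the legitimacy of the passage $v\mapsto(s_1,s_2,s_3)$ itself: the elimination that produces the quartic can clear denominators and so, a priori, create extraneous roots, and one must also check that the recovered $s_1^2$ is strictly positive for a positive root. All of this is precisely the content of part (1) of Lemma \ref{Grunert real}, whose proof the paper defers to Haralick's derivation, so I would simply invoke it; absent that, one would verify that for a positive root $v$ the (linear-fractional) expression for $u$ is non-degenerate and that the value of $s_1^2$ obtained from any of the three equations of (\ref{equ1}) is positive, a short computation using $\alpha<\min(\angle A,\pi-\angle A)$ and the analogous inequalities that hold when $O$ is outside $\overline{T_{union}}$.
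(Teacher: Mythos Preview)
Your proposal is correct and follows essentially the same route as the paper: invoke Lemma~\ref{Grunert real}(1) to pass from a root to a real triplet, use Lemma~\ref{Grunert real}(2) to rule out zero entries since $O\notin\overline{T_{union}}$, apply Lemma~\ref{Grunert real}(3) to reduce to the solution/S-solution dichotomy, and then eliminate the S-solution case via Lemma~\ref{no S-solution}. The paper's own proof is exactly this chain, stated tersely; your additional sign bookkeeping (that $v>0$ forces $s_1,s_3$ to have the same sign, so the positive representative still has $s_3=vs_1$) and your observation that consequently \emph{every} real root is positive are legitimate refinements that the paper leaves implicit.
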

\begin{proof}
  When the optical center O is outside of $\overline{T_{union}}$, by (2) of Lemma \ref{Grunert real}, none of the 3 elements is zero in the triplet ($s_1,s_2,s_3$). By (3) of Lemma \ref{Grunert real}, this non-zero triplet ($s_1,s_2,s_3$) should be either a solution of the P3P problem, or a S-solution of the P3P problem. By Lemma \ref{no S-solution}, when the optical center $O$ is outside of $\overline{T_{union}}$ , it cannot be a S-solution, hence it must be a solution of the P3P problem.
\end{proof}
\textbf{Remark 2}
\begin{itemize}
\item[(1)] Theorem \ref{one to one} provides an explicit relation between a solution of the P3P problem and a positive root of the resulting quartic equation. To our knowledge, our result is the first in the literature to establish such a relation;
\item[(2)] Theorem \ref{one to one} indicates that when the optical center is outside of $\overline{T_{union}}$,  a simple relationship exists between a positive root of the quartic equation and a solution of the P3P problem.
\end{itemize}

\subsection{When the optical center passes through one of the 3 toroids ($T_{\angle A},T_{\angle B},T_{\angle C}$)}

In the following, we investigate the possible changes of the number of the solutions in the P3P problem when the optical center passes through one of the 3 toroids ($T_{\angle A},T_{\angle B},T_{\angle C}$). We have the following result:
\begin{thm}\label{change}
  When the optical center $O$ passes through toroid $T_{\angle A}(T_{\angle B},T_{\angle C})$ except possibly for two concentric circles on it, the number of the solutions of the corresponding P3P problem must change exactly by 1, either increased by 1 or decreased by 1.
\end{thm}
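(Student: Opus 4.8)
The plan is to identify the crossing of $T_{\angle A}$ with the event that the leading coefficient $A_4$ of Grunert's quartic passes through $0$ and changes sign, and then to track the roots. Since $A_4=4c^2/b^2(\cos^2\angle A-\cos^2\alpha)=4c^2/b^2(\cos\angle A-\cos\alpha)(\cos\angle A+\cos\alpha)$, we have $A_4=0$ exactly when $\alpha\in\{\angle A,\pi-\angle A\}$, i.e. exactly when $O\in T_{\angle A}\cup T_{\pi-\angle A}$; because $T_{\angle A}=\{\alpha=\angle A\}$ is a level set of the subtended angle $\alpha$ and is disjoint from $T_{\pi-\angle A}$ away from $\{B,C\}$, a path of optical centers crossing $T_{\angle A}$ transversally at a smooth point $P_0$ that avoids the two exceptional circles has $\alpha-\angle A$, hence the factor $\cos\angle A-\cos\alpha$, hence $A_4$, changing sign at the crossing. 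So I would fix such a path $O(t)$ with continuously varying quartic coefficients $A_i(t)$, $A_4(0)=0$, $A_4(t)\neq0$ for $0<|t|<\varepsilon$, and (these being among the conditions that cut out the exceptional circles) $A_3(0)\neq0$ and $A_0(0)\neq0$, reducing the statement to a one-dimensional root-tracking problem.

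First I would invoke the elementary degeneration fact for a quartic whose leading coefficient dies: if $A_4\to0$ and $A_3(0)\neq0$, then three roots converge to the roots of the limiting cubic $A_3(0)v^3+A_2(0)v^2+A_1(0)v+A_0(0)=0$ while the fourth escapes, $v_\ast(t)=-A_3(0)/A_4(t)+O(1)$. Because the coefficients are real and non-real roots occur in conjugate pairs, this lone escaping root is real for small $t\neq0$, and $\operatorname{sign}v_\ast(t)=-\operatorname{sign}A_3(0)\cdot\operatorname{sign}A_4(t)$; as $A_4$ flips sign at $t=0$, the escaping root is $+\infty$-bound on one side of $T_{\angle A}$ and $-\infty$-bound on the other. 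The three non-escaping roots depend continuously on $t$, so across $t=0$ neither their number of real roots changes, nor — as long as none of them equals $0$ at $t=0$, which $A_0(0)\neq0$ precludes — their number of positive roots.

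Second I would convert "positive root" into "solution" via Lemma~\ref{Grunert real}. For the escaping branch, the second equation of~(\ref{equ1}) gives $s_1^2=b^2/(1+v_\ast^2-2\cos\beta\,v_\ast)$, so $s_1\to0^+$ while $s_3=v_\ast s_1$ has $s_3^2\to b^2$; dividing the first two equations of~(\ref{equ1}) gives $1+u^2-2\cos\gamma\,u=(c^2/b^2)(1+v^2-2\cos\beta\,v)$, whence $|u|\to\infty$ and $s_2^2=u^2c^2/(1+u^2-2\cos\gamma\,u)\to c^2$, and the sign of $u$ (equivalently of $s_2$) that the derivation selects works out, on $T_{\angle A}$, so that the escaping triple tends to $(0,c,b)$ — all coordinates non-negative — on the side where $v_\ast>0$, hence is a genuine P3P solution there, and tends to a triple with a negative coordinate on the side where $v_\ast<0$, hence is only an S-solution there. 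Combined with the previous step, exactly one genuine solution appears or disappears as $O$ crosses $T_{\angle A}$, which is the assertion; the cases $T_{\angle B}$ and $T_{\angle C}$ follow by symmetry, applying the same analysis to the quartic in $u$, or, for $T_{\angle C}$, using that $A_0$ vanishes there, which sends a root to $0$ rather than to $\infty$. The same bookkeeping also explains why $T_{\pi-\angle A}$, although it likewise gives $A_4=0$, lies in the other group: there $\cos\alpha=-\cos\angle A$ reverses the sign governing $s_2$, so the escaping root is an S-solution on both sides and the count of genuine solutions is unchanged — the direction of the change on $T_{\angle A}$ itself is not universal either, since it is governed by $\operatorname{sign}A_3(0)$, which varies over the surface.

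The main obstacle is the honest treatment of the exceptional set — showing it really reduces to two concentric circles on $T_{\angle A}$ and not to something larger. The ways the argument can break are exactly: (a) $A_3(0)=0$, so the escaping root does not resolve at leading order (two roots run off, or one runs off faster); (b) a non-escaping root equals $0$ at $t=0$, i.e. $A_0(0)=0$, which forces $O(0)$ onto $T_{\angle C}$ or $T_{\pi-\angle C}$; (c) $O(0)$ lies on the discriminant surface of the quartic, so a real–complex transition among the non-escaping roots coincides with the crossing; and (d) $P_0$ is a singular point of $T_{\angle A}$ — the surface of revolution of the circumcircle about the chord $BC$ is always of spindle type, so it carries two singular circles concentric about $BC$ — or a point where $T_{\angle A}$ meets another toroid, so the crossing need not be transverse. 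I expect the genuinely geometric conditions (a) and (d) to coincide and to cut out precisely those two concentric circles, while (b) and (c) add only lower-dimensional curves that one checks are already contained in, or can be discarded together with, that set; verifying this, and verifying that off it the three non-escaping roots retain both their real–complex type and their signs, is where the real work lies.
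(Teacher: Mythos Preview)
Your approach is sound and genuinely different from the paper's. The paper argues geometrically: via Lemmas~4--7 it shows that near the control point $A$ the toroids $T_\beta,T_\gamma$ are well approximated by two cones which always intersect in two rays, and then compares the direction of the relevant ray with the outward normal $n_{T_{\angle A}}$ --- if the included angle is $<\pi/2$ the ray meets $T_{\angle A_-}$ inside a small neighborhood of $A$ (producing a solution for $O_{out}$) while its opposite ray meets $T_{\angle A_+}$ (producing an S-solution for $O_{in}$), and conversely if $>\pi/2$; the exceptional case is this angle equalling $\pi/2$. You track the same degeneration algebraically: $A_4\to 0$ sends one root to $\pm\infty$, and $v\to\infty$ is exactly $s_1\to 0$, so the changing solution is again the one collapsing onto $A$. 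Your sign bookkeeping does check out --- from the linear relation Grunert uses to recover $u$ from $v$ one gets $u\sim \frac{c\cos\angle A}{b\cos\alpha}\,v$ as $|v|\to\infty$, so on $T_{\angle A}$ (where $\cos\alpha=\cos\angle A$) one has $u\sim(c/b)v$ and the escaping triple limits to $(0^+,c,b)$ for $v_\ast>0$ and to $(0^+,-c,-b)$ for $v_\ast<0$, while on $T_{\pi-\angle A}$ the sign of $u/v$ flips and both limits are S-solutions, exactly as you predict. Your route is more direct and ties the phenomenon cleanly to the quartic practitioners actually solve; the paper's route gives a concrete geometric picture of \emph{where} the changing solution sits. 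On the exceptional locus both arguments are equally incomplete: the paper asserts without detailed proof that its $\theta=\pi/2$ condition cuts out two concentric circles, just as you leave the coincidence of your conditions (a)--(d) with that set as the ``real work''; neither proof explicitly rules out a simultaneous status change among the bounded solutions, which in your framework amounts to excluding $O_{on}$ from the other toroid pairs (part of your condition (d)).
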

Here, we only give a proof for the case of the optical center passing through the toroid $T_{\angle A}$, the other two can be similarly proved. At first, we introduce the following 5 lemmas:
\begin{lemma}\label{Lemma4}
  In a small neighborhood of point $A$ (or $B$), the toroid $T_\gamma$ could approach infinitesimally to the right circular cone $Cone_{A,AB,(\pi-\gamma)}$ (or  $Cone_{B,AB,(\pi-\gamma)}$) with the point $A$ ( or $B$) as the apex point, line $AB$ the axis, and $(\pi-\gamma)$ the half-angle under any $L_i  (i=1,2,\ldots,\infty)$ norm. In other words, $ \forall \epsilon_1>0, \exists \epsilon_A>0$, if $X\in \delta_A=\{X|\|X-A\| \leq \epsilon_A \}$, we have $\|Cone_{A,AB,(\pi-\gamma)}-T_\gamma\| \leq\epsilon_1.$
\end{lemma}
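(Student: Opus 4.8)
The plan is to exploit that $T_\gamma$ is a surface of revolution about the line $AB$ and that $A$ lies on that axis, so that the local shape of $T_\gamma$ near $A$ is entirely determined by the germ at $A$ of its generating planar curve, which I claim is tangent to a generator of the stated cone.

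First I would set coordinates with $A$ at the origin and the line $AB$ along the $x$-axis, and describe $T_\gamma$ as the surface obtained by revolving about the $x$-axis the circular arc $\Gamma\subset\{z\ge 0\}$ through $A$ and $B$ from which the chord $AB$ is seen under the angle $\gamma$ (an arc of the circle of radius $c/(2\sin\gamma)$). Writing $\rho=\sqrt{y^{2}+z^{2}}$ for the distance to the axis, near $A$ both surfaces are graphs over $\rho$: the cone is $\{x=\rho\cot(\pi-\gamma)\}$, and $T_\gamma$ is $\{x=\varphi(\rho)\}$, where $\varphi$ is the local expression of $\Gamma$ giving the axial coordinate as a function of the transverse distance near $A$.

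The geometric heart of the argument is the identification of the tangent direction of $\Gamma$ at $A$: by the tangent--chord angle theorem, the tangent to $\Gamma$ at $A$ makes with the ray $AB$ the inscribed angle subtended by $AB$ from the complementary arc of the same circle, that is, $\pi-\gamma$. Hence $\varphi(0)=0$ and $\varphi'(0)=\cot(\pi-\gamma)$, which coincide with the value and slope of the cone's graph. Since $\Gamma$ is a smooth curve whose curvature is a fixed number depending only on $a,b,c$, a second-order Taylor estimate yields $|\varphi(\rho)-\rho\cot(\pi-\gamma)|\le M\rho^{2}$ for all $\rho$ in some fixed interval $[0,\rho_{0}]$, with $M$ depending only on $\Gamma$. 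Because the revolution about the axis preserves the transverse variables and hence distances, this pointwise bound transfers directly to a comparison of $T_\gamma$ with $Cone_{A,AB,(\pi-\gamma)}$: inside the ball $\delta_{A}=\{\|X-A\|\le\epsilon_{A}\}$ every point of one surface has a point of the other at the same $(y,z)$ within distance $M\epsilon_{A}^{2}$. Interpreting $\|Cone_{A,AB,(\pi-\gamma)}-T_\gamma\|$ as this Hausdorff-type deviation over $\delta_{A}$, and noting that passing from the Euclidean norm to any other $L_{i}$ norm only changes the constant, it suffices to choose $\epsilon_{A}\le\min\{\rho_{0},\sqrt{\epsilon_{1}/M}\}$. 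The assertion with $B$ in place of $A$ follows by the identical argument with the two endpoints of $AB$ interchanged.

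The one step that requires care is making the ambient statement precise: fixing exactly what is meant by the surface-to-surface distance $\|Cone_{A,AB,(\pi-\gamma)}-T_\gamma\|$ restricted to the neighborhood $\delta_{A}$ (a graphical, Hausdorff-type estimate), and checking that the matching ``same $(y,z)$'' keeps the corresponding point within the prescribed neighborhood up to the same $O(\epsilon_{A}^{2})$ error. Everything else---the tangent--chord computation and the quadratic remainder---is routine once the generating circle has been written down explicitly.
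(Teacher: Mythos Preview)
Your proposal is correct and follows essentially the same approach as the paper: both reduce to the generating plane, identify the cone as the surface swept by the tangent line to the circular arc at $A$ (the paper states this directly, you justify it via the tangent--chord angle), and then estimate the arc--tangent deviation as $O(\epsilon_A^2)$ before transporting the bound by revolution. The only cosmetic difference is that the paper compares the arc point $A_1$ and tangent point $A_2$ at equal distance $\epsilon_A$ from $A$ and computes $\lim_{\epsilon_A\to 0}\|A_1-A_2\|/\epsilon_A=0$, whereas you parameterize both surfaces as graphs $x=\varphi(\rho)$ over the transverse radius and invoke Taylor; these are equivalent second-order comparisons.
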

\begin{figure}[t]
\begin{center}
\includegraphics[width=0.6\linewidth]{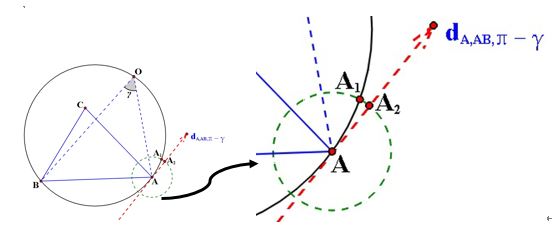}
\includegraphics[width=0.3\linewidth]{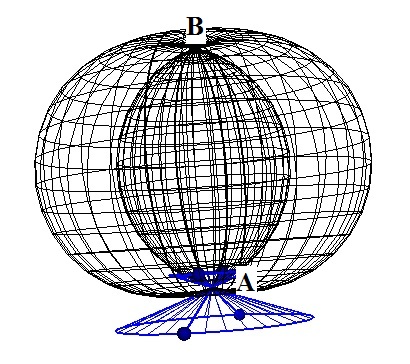}
\end{center}
   \caption{Toriod can be approximated by a cone at the apex.}
\label{fig4}
\end{figure}
\begin{proof}
  As shown in Fig.\ref{fig4}, $T_\gamma$ is generated by rotating the arc $\widehat{AOB}$ around cord $AB$, $Cone_{A,AB,(\pi-\gamma)}$ is generated by rotating the tangent line $d_{A,AB,(\pi-\gamma)}$ of the circle $AOB$ at point $A$ around cord $AB$. Without loss of generality, let us only consider the plane $AOB$ in Fig.\ref{fig4}a: Given a small circular neighborhood at point $A$, say $\delta_A=\{X|\|X-A\| \leq \epsilon_A \}$ , the outer border of $\delta_A$ intersects the arc $\widehat{AOB}$ at point $A_1$, and intersects the tangent line $d_{A,AB,(\pi-\gamma)}$ at point $A_2$, then we have:
  \begin{gather}
  \begin{split}\label{limit to cone}
  \lim\limits_{\epsilon_A \to 0 }\frac {\|A_1-A_2\|_i}{\epsilon_A }
  &=\lim\limits_{\epsilon_A \to 0 }\frac {\|2R_\gamma sin(arctan(\frac{\epsilon_A}{4R_\gamma}))\epsilon_A\|_i}{\epsilon_A}\\
 &=\lim\limits_{\epsilon_A \to 0 }\frac{\| \frac{\epsilon_A^2}{2}\|_i}{\epsilon_A}=0
  \end{split}
  \end{gather}
where $R_\gamma$ is the radius of the circle $AOB$. (\ref{limit to cone}) is meant that the toroid $T_\gamma$ approach infinitesimally to the circular cone $Cone_{A,AB,(\pi-\gamma)}$.
A geometric illustration for the situation is shown in Fig.\ref{fig4}b.
\end{proof}
\begin{figure}[t]
\begin{center}
\includegraphics[width=0.6\linewidth]{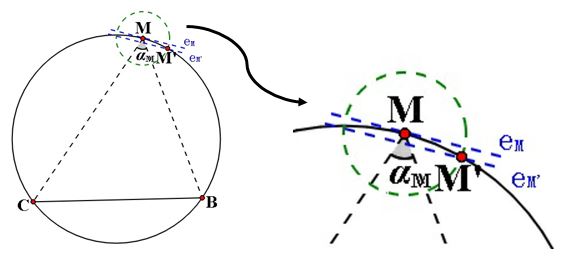}
\end{center}
   \caption{$d_{EM}=O(d_M)$.}
\label{fig6}
\end{figure}
\begin{lemma}  \label{Lemma5}
  As shown in Fig.\ref{fig6}, $M$ is a point on the circular arc $\widehat{BMC}$ generated by the fixed inscribed angle of $\alpha$ with respect to the cord $BC$, the radius of $\widehat{BMC}$ is $R_\alpha $, $e_M$ the tangent line at $M$, then given a small circular neighborhood at $M$, $\delta_M=\{X|\|X-M\|\leq \epsilon_M \}$, the outer border of $\delta_M$ intersects with  $\widehat{BMC}$ at point $M'$. Define the distance from $M'$ to $e_M$ as $d_{EM}$, and the distance between $M'$ and $M$ as $d_M$, then we have
  $$\lim \limits_{\epsilon_M \to 0 }\frac {d_{EM}}{d_M}=0$$
\end{lemma}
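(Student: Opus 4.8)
\textbf{Proof proposal for Lemma~\ref{Lemma5}.}
The plan is to reduce the statement to an elementary sagitta (versine) estimate for a circular arc, carried out entirely in the plane of $\widehat{BMC}$, exactly paralleling the cone approximation of Lemma~\ref{Lemma4}. First I would fix local coordinates: put $M$ at the origin, take the tangent line $e_M$ as the $x$-axis, and observe that, since $e_M$ is tangent at $M$ to the circle of radius $R_\alpha$ carrying $\widehat{BMC}$, the center of that circle lies on the $y$-axis at $(0,\pm R_\alpha)$; without loss of generality take it to be $(0,R_\alpha)$. Then the arc near $M$ is parametrized by $X(\phi)=\bigl(R_\alpha\sin\phi,\ R_\alpha(1-\cos\phi)\bigr)$, with $\phi=0$ giving $M$.

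Next I would write the two distances in closed form using the half-angle identity $1-\cos\phi=2\sin^2(\phi/2)$. The chord length is
\[d_M=\|X(\phi)-M\|=R_\alpha\sqrt{2(1-\cos\phi)}=2R_\alpha\bigl|\sin(\phi/2)\bigr|,\]
and, since $e_M$ is the $x$-axis, the distance from $X(\phi)$ to $e_M$ is simply the second coordinate,
\[d_{EM}=R_\alpha(1-\cos\phi)=2R_\alpha\sin^2(\phi/2).\]
For the neighborhood of radius $\epsilon_M$, the intersection point $M'$ corresponds to the parameter value $\phi$ with $d_M=\epsilon_M$, i.e. $\bigl|\sin(\phi/2)\bigr|=\epsilon_M/(2R_\alpha)$; such a $\phi$ exists and is unique on each side of $M$ once $\epsilon_M<2R_\alpha$, because $d_M$ is strictly increasing in $|\phi|$ on $(-\pi,\pi)$, and if both of the (at most two) intersection points are admissible the computation applies verbatim to either. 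Dividing the two expressions gives
\[\frac{d_{EM}}{d_M}=\bigl|\sin(\phi/2)\bigr|=\frac{\epsilon_M}{2R_\alpha},\]
which tends to $0$ as $\epsilon_M\to0$, proving the claim; in fact this already shows the sharper bound $d_{EM}=o(d_M)$, the planar counterpart of the infinitesimal cone approximation in Lemma~\ref{Lemma4}.

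I do not anticipate a real obstacle, since the argument is a single half-angle computation on a circle of radius $R_\alpha$. The only points requiring a word of care are that $M$ be taken in the open arc, so the local parametrization is two-sided (if $M$ were an endpoint $B$ or $C$ the one-sided version is identical), and that $M'$ be well defined for $\epsilon_M$ small, which is immediate from the strict monotonicity of $d_M$ in $|\phi|$. This lemma, together with Lemma~\ref{Lemma4}, will serve as the local-geometry input used later to track how the solutions behave in a small neighborhood of a control point when the optical center crosses $T_{\angle A}$.
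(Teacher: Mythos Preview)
Your proposal is correct and follows essentially the same sagitta computation as the paper: both compute $d_{EM}=R_\alpha(1-\cos\phi)$ for the central angle $\phi$ and compare it to the chord $d_M$, arriving at the limit $\epsilon_M/(2R_\alpha)\to 0$. Your use of the half-angle identity gives the exact relation $d_{EM}/d_M=|\sin(\phi/2)|$, whereas the paper reaches the same limit through the Taylor approximation $d_{EM}\approx d_M^2/(2R_\alpha)$; the substance is identical.
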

\begin{proof}
  When $\epsilon_M \rightarrow 0 $, we have
{\small
  \begin{equation*} \begin{split}
    d_{EM}
  &\rightarrow R_\alpha-R_\alpha \cos \left(\frac{d_M}{R_\alpha}\right ) \\
  &\rightarrow R_\alpha-R_\alpha \left(1-\frac{1}{2}\left(\frac{d_M}{R_\alpha}\right)^2\right)=\frac {d_M^2}{2R_\alpha}
  \end{split}
  \end{equation*}
  }
Hence,
{\small
$$\lim \limits_{\epsilon_M \to 0 }\frac {d_{EM}}{d_M}
=\lim \limits_{\epsilon_M \to 0 }\frac {d_{M}}{2R_\alpha}
=\lim \limits_{\epsilon_M \to 0 }\frac {\epsilon _M}{2R_\alpha}=0$$
}\end{proof}
\begin{figure}[t]
\begin{center}
\includegraphics[width=0.6\linewidth]{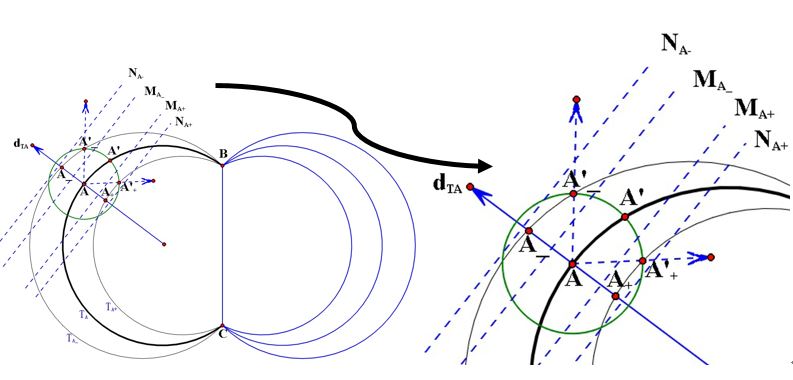}
\end{center}
   \caption{$T_A$,$T_{A-}$ and $T_{A+}$ almost parallel at point $A$ within a small neighborhood.}
\label{fig7}
\end{figure}
Now let $A_+$ denote a point which is inside of the toroid $T_{\angle A}$ with the larger inscribed angle $\angle A_+$ than $\angle A$,
 and $A_-$ a point which is outside of the toroid $T_{\angle A}$ with the smaller inscribed angle $\angle A_-$ than $\angle A$.
\begin{lemma} \label{Lemma6}
  As shown in Fig.\ref{fig7}, $\forall \epsilon_\theta>0, \exists \epsilon_A>0$ and $\epsilon_{\angle A} >0$ such that $|\angle A_+-\angle A|=\Delta \angle A_+< \angle \epsilon_A $ and  $|\angle A_--\angle A|=\Delta \angle A_-< \angle \epsilon_A $ for the toroids $T_{\angle A+}$ and $T_{\angle A-}$ within the neighborhood of point $A:\delta_A= \{X|\|X-A\| \leq \epsilon_A \}$. In addition, denote $\theta_{T_{\angle A-} }$($\theta_{T_{\angle A+} }$) as the included angle between the outer normal at point $A, d_{T_{\angle A}}$, with the segment $AA_-'(AA_+')$, where $A_-'(A_+')$ is an intersecting point between the toroid $T_{\angle A-}$($T_{\angle A+}$)
with the outer border of $\delta_A$, then for any given  $(\theta_{g-}\in(0,\frac{\pi}{2} ),\theta_{g+}\in (\frac{\pi}{2},\pi))$, if  $0<\frac{\pi}{2}-\theta_{g-}<\epsilon_\theta$ ($0<\theta_{g+}-\frac{\pi}{2}<\epsilon_\theta$), we have $\theta_{g-}<\theta_{T_{\angle A-}}<\frac{\pi}{2}$($\frac{\pi}{2}<\theta_{T_{\angle A+}}<\theta_{g+}$).
\end{lemma}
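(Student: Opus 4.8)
The plan is to reduce the statement to a planar fact by working in the plane of the triangle $ABC$, which is exactly the meridian plane of the surface of revolution $T_{\angle A}$ through $A$ (both the profile arc $\widehat{BAC}$ and the rotation axis $BC$ lie in it); there the three toroids appear as a nested family of circular arcs over the chord $BC$ with varying inscribed angle, near the point $A$ on the $\angle A$-arc. I would first record the structural facts needed: $A$ is a smooth point of $T_{\angle A}$ (it is off the axis $BC$); the outer normal $d_{T_{\angle A}}$ at $A$ lies in the meridian plane; the function $f(X)=\angle BXC$ is smooth near $A$ with $\nabla f(A)\neq 0$, directed along $-d_{T_{\angle A}}$ because a larger inscribed angle means the inner side of $T_{\angle A}$; hence near $A$ the family $\{T_{\angle\phi}\}$ is a smooth, strictly monotone foliation transverse to $d_{T_{\angle A}}$, with $T_{\angle A_-}$ lying strictly on the outer side of $T_{\angle A}$ and $T_{\angle A_+}$ strictly on the inner side.

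The angle-control clause is then just continuity of $f$: given $\epsilon_{\angle A}>0$, shrink $\delta_A$ so that $|f-\angle A|<\epsilon_{\angle A}$ on it, forcing $\Delta\angle A_\pm<\epsilon_{\angle A}$ for all $A_\pm\in\delta_A$. For quantitative control I would use coordinates centred at $A$ with $d_{T_{\angle A}}$ along the positive axis, so that $T_{\angle A}$ is a graph $z=g$ with $g(A)=0$ and vanishing gradient there; by Lemma \ref{Lemma5} (the second-order deviation of an arc from its tangent line --- Lemmas \ref{Lemma4} and \ref{Lemma5} supply exactly this kind of second-order flatness of a toroid against its tangent object) one gets $|g|=O(\epsilon_A^2)$ on $\delta_A$; and monotone transversality writes $T_{\angle\phi}$ as $z=g+h(\,\cdot\,;\phi)$ with $h(\,\cdot\,;\angle A)=0$ and $|h|$ comparable to $|\phi-\angle A|$, so the normal gap between $T_{\angle A}$ and $T_{\angle A_-}$ is $\Theta(\Delta\angle A_-)$.

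Now take $A_-'$ on $T_{\angle A_-}\cap\partial\delta_A$ in the meridian plane; then $\|A_-'-A\|=\epsilon_A$ and the signed normal component of the chord is $z_{A_-'}=g+h=O(\epsilon_A^2)+\Theta(\Delta\angle A_-)$, so
\[
\cos\theta_{T_{\angle A-}}=\frac{z_{A_-'}}{\epsilon_A}=O(\epsilon_A)+\Theta\!\left(\frac{\Delta\angle A_-}{\epsilon_A}\right).
\]
Since $T_{\angle A_-}$ is strictly outside $T_{\angle A}$, along a suitable part of the curve $h$ strictly beats $|g|$, so there $z_{A_-'}>0$, i.e.\ $\theta_{T_{\angle A-}}<\pi/2$; and as $\Delta\angle A_-$ ranges over its admissible interval $\cos\theta_{T_{\angle A-}}$ sweeps an interval with left endpoint $0$. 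Given $\epsilon_\theta$, I would first fix $\epsilon_A$ small enough that the $O(\epsilon_A)$ term is below $\tfrac12\sin\epsilon_\theta$ (this also validates the second-order approximations), then pick $\epsilon_{\angle A}$ so that $\Theta(\Delta\angle A_-/\epsilon_A)<\tfrac12\sin\epsilon_\theta$ whenever $\Delta\angle A_-<\epsilon_{\angle A}$; for any prescribed $\theta_{g-}$ with $0<\tfrac\pi2-\theta_{g-}<\epsilon_\theta$ one then selects $A_-\in\delta_A$ realising $\cos\theta_{T_{\angle A-}}\in(0,\cos\theta_{g-})$, i.e.\ $\theta_{g-}<\theta_{T_{\angle A-}}<\pi/2$. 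The $T_{\angle A+}$ case is identical with the inner side, $\angle A_+>\angle A$, $z_{A_+'}<0$, and the reversed bounds $\pi/2<\theta_{T_{\angle A+}}<\theta_{g+}$. The two concentric circles on $T_{\angle A}$ excluded in Theorem \ref{change} are the loci where this monotone-transversality picture at $A$ breaks down, and we assume $A$ lies on neither.

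The step I expect to be the real obstacle is the sign bookkeeping in that core estimate: showing the chord $AA_-'$ leans to the outer side of the \emph{tangent plane} of $T_{\angle A}$ at $A$, not merely to the outer side of the surface --- equivalently, that the inter-toroid gap $\Theta(\Delta\angle A_-)$ can be made to overcome the curvature contribution $O(\epsilon_A^2)$. This is what forces the order of the quantifiers ($\epsilon_A$ first, then $\epsilon_{\angle A}$ depending on it, then a choice of $A_-$ inside $\delta_A$), and it is precisely where the second-order flatness furnished by Lemmas \ref{Lemma4} and \ref{Lemma5} is indispensable. Everything else --- continuity of $f$, the $O(\epsilon_A^2)$ and $\Theta(\Delta\angle A_-)$ bounds, and the final $\epsilon$-chasing --- is routine.
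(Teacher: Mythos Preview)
Your approach coincides with the paper's: both arguments rest on the second-order contact furnished by Lemma~\ref{Lemma5}, trapping $T_{\angle A}$ and the nearby toroids $T_{\angle A_\pm}$ between planes orthogonal to $d_{T_{\angle A}}$ at signed distance $o(\epsilon_A)$ from the tangent plane at $A$, which forces $\cos\theta_{T_{\angle A_\pm}}=o(1)$ with the correct sign coming from which side of $T_{\angle A}$ the perturbed toroid lies on --- the paper packages this as parallel sandwiching planes $\Pi_{MA_\pm},\Pi_{NA_\pm}$, while you compute $\cos\theta=z_{A_\pm'}/\epsilon_A$ in graph coordinates and then choose $\epsilon_A$ before $\epsilon_{\angle A}$. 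One caveat: your reduction to the meridian plane through $A$ treats only those $A_\pm'$ lying in that plane, whereas the lemma (as invoked in the proof of Theorem~\ref{change}) is applied to an \emph{arbitrary} point of $T_{\angle A_\pm}\cap\partial\delta_A$ --- the fix is immediate since your $O(\epsilon_A^2)$ bound on $g$ holds for the full two-variable graph of a smooth surface --- and your closing remark about the two concentric circles is extraneous here, since those exceptions concern the optical center's location on $T_{\angle A}$ in Theorem~\ref{change}, not the fixed control point $A$ in this lemma.
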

\begin{proof}
  By Lemma \ref{Lemma5}, we know that if $\epsilon_A$ is sufficiently small, the intersecting point $M'$ of the outer border $\delta_A= \{X|\|X-A\| \leq \epsilon_A \}$ with the toroid $T_{\angle A}$ must be within the two parallel planes $\Pi_{MA_-}$ and $\Pi_{MA_+}$ , where both $\Pi_{MA_-}$ and $\Pi_{MA_+}$ are orthogonal to the outer normal $d_{T_{\angle A}}$ at point $A$, and their distance is a higher order infinitesimal of $\epsilon_A$. Similarly if $\epsilon_{\angle A}$ is sufficiently small, $A_-'$ ($A_+'$) should be within the two parallel planes $\Pi_{MA_-}$ and $\Pi_{NA_-}$($\Pi_{MA_+}$ and $\Pi_{NA_+}$) with $\Pi_{MA_-}$ and $\Pi_{NA_-}$ ($\Pi_{MA_+}$ and $\Pi_{NA_+}$) both being orthogonal to $d_{T_{\angle A}}$ and their distance also being a higher order infinitesimal of $\epsilon_A$ . Since all the 3 distances between $\Pi_{MA_-}$ and $\Pi_{MA_+}$, $\Pi_{MA_-}$ and $\Pi_{NA_-}$ and $\Pi_{MA_+}$ and $\Pi_{NA_+}$ are of higher order infinitesimal of $\epsilon_A$, then for any given   $(\theta_{g-}\in(0,\frac{\pi}{2} ),\theta_{g+}\in (\frac{\pi}{2},\pi))$, if  $0<\frac{\pi}{2}-\theta_{g-}<\epsilon_\theta$ ($0<\theta_{g+}-\frac{\pi}{2}<\epsilon_\theta$), we have $\theta_{g-}<\theta_{T_{\angle A-}}<\frac{\pi}{2}$($\frac{\pi}{2}<\theta_{T_{\angle A+}}<\theta_{g+}$).
\end{proof}
\begin{figure}[t]
\begin{center}
\includegraphics[width=0.3\linewidth]{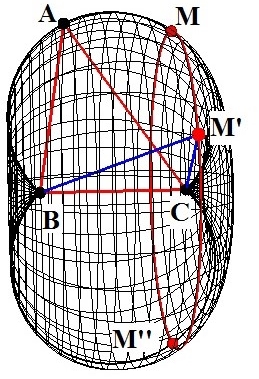}
\end{center}
   \caption{The optical center $M'$ lies on the $T_{A}$.}
\label{fig8}
\end{figure}
\begin{lemma}\label{Lemma7}
  As shown in Fig.\ref{fig8}, $M'$ is a point on the toroid $T_{\angle A}$, $\angle AM'C=\beta$ is the subtended angle of $M'$ to the chord $AC$, and  $\angle AM'B=\gamma$ the subtended angle of $M'$ to the chord $AB$ . By lemma \ref{Lemma4} at a small neighborhood of point $A$, the toroid $T_\beta$ and $T_\gamma $ can be infinitesimally approximated as the $Cone_{A,AC,(\pi-\beta)}$ and $Cone_{A,AB,(\pi-\gamma)}$, then these two cones must intersect at two rays from point $A$.
\end{lemma}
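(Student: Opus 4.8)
The plan is to transfer the claim onto the unit sphere of directions at the common apex $A$, where each cone becomes a small circle, and then to extract the genericity inequality that forces two intersection points from the trihedral angle at $M'$. First I would put $A$ at the centre of a unit sphere $S$ and let $\hat b,\hat c\in S$ be the unit vectors along $\overrightarrow{AB}$ and $\overrightarrow{AC}$. By Lemma \ref{Lemma4} the sheet of $Cone_{A,AB,(\pi-\gamma)}$ that approximates $T_\gamma$ near $A$ meets $S$ in the small circle $\Gamma_\gamma=\{u\in S:\angle(u,\hat b)=\pi-\gamma\}$, and likewise $Cone_{A,AC,(\pi-\beta)}$ meets $S$ in $\Gamma_\beta=\{u\in S:\angle(u,\hat c)=\pi-\beta\}$. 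A ray from $A$ lying on both cones is exactly a point of $\Gamma_\gamma\cap\Gamma_\beta$, so it suffices to prove that these two circles meet in exactly two points.

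Next I would invoke the elementary spherical fact about two small circles. Writing $d=\angle(\hat b,\hat c)=\angle BAC=\angle A$, the spherical law of cosines shows that, as $u$ runs once around $\Gamma_\gamma$ with longitude $\psi$ about the great circle through $\hat b$ and $\hat c$, the quantity $\angle(u,\hat c)$ varies between $|\,d-(\pi-\gamma)\,|$ and its wrap-around maximum and hits the value $\pi-\beta$ exactly twice precisely when the three numbers $\angle A,\ \pi-\beta,\ \pi-\gamma$ are the side lengths of a genuine (nondegenerate) spherical triangle, i.e. when
$$|\beta-\gamma|<\angle A<\beta+\gamma \qquad\text{and}\qquad \angle A+\beta+\gamma<2\pi ;$$
an equality in place of one of these yields a single (tangential) common ray, and a strict violation yields none.

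Finally I would obtain these inequalities from the hypothesis $M'\in T_{\angle A}$. The three rays $\overrightarrow{M'A},\overrightarrow{M'B},\overrightarrow{M'C}$ form a trihedral angle at $M'$ whose face angles are $\angle AM'B=\gamma$, $\angle AM'C=\beta$ and $\angle BM'C$; and since $T_{\angle A}=\{P:\angle BPC=\angle A\}$, membership $M'\in T_{\angle A}$ forces $\angle BM'C=\angle A$. The classical inequalities for a trihedral angle — each face angle is less than the sum of the other two, and the three face angles sum to less than $2\pi$ — are then word for word the four conditions displayed above, with equality in any of them only when the rays $\overrightarrow{M'A},\overrightarrow{M'B},\overrightarrow{M'C}$ become coplanar, i.e. only when $M'$ lies in the plane of $A,B,C$. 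This forces $M'$ into the one-dimensional locus $\mathrm{plane}(ABC)\cap T_{\angle A}$, which is exactly the exceptional set of circles set aside in Theorem \ref{change}; for every other $M'\in T_{\angle A}$ the inequalities are strict, so $\Gamma_\gamma$ and $\Gamma_\beta$ meet in exactly two points and the two cones meet in exactly two rays from $A$.

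The step I expect to be the main obstacle is the last one: one must check that $M'\in T_{\angle A}$ really delivers $\angle BM'C=\angle A$ rather than its supplement (the latter occurs on $T_{\pi-\angle A}$), and that the degeneracy locus of the trihedral inequalities inside $T_{\angle A}$ is precisely the family of circles excluded in Theorem \ref{change} — so that, strictly speaking, the statement of the lemma should carry the same exception (on those circles the two rays coincide). The spherical two-circle count in the middle step is routine but does require the wrap-around branch of the maximum of $\angle(u,\hat c)$ to be handled, since $\pi-\beta$ and $\pi-\gamma$ may each exceed $\pi/2$.
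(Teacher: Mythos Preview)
Your approach is correct and arrives at the same necessary-and-sufficient condition for the two cones to intersect that the paper derives --- namely $|\beta-\gamma|\le\angle A\le\beta+\gamma$ together with $\angle A+\beta+\gamma\le 2\pi$, which the paper writes equivalently as $\cos(\beta_0+\gamma_0)\le\cos\angle A\le\cos(\beta_0-\gamma_0)$ with $\beta_0=\pi-\beta$, $\gamma_0=\pi-\gamma$. Where the two arguments diverge is in how this inequality is \emph{verified} for $M'\in T_{\angle A}$. The paper substitutes $\cos\beta,\cos\gamma$ in terms of the distances $\|M'A\|,\|M'B\|,\|M'C\|$ via the law of cosines, obtains a quadratic expression in $t=\|M'A\|^2$, checks that its leading coefficient is positive, and then observes that it vanishes at the two extreme rotation angles $\theta_{M'}=0,\pi$ (precisely the two planar positions of $M'$), hence is $\le 0$ throughout. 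You instead note that the four required inequalities are exactly the face-angle inequalities of the trihedral angle at $M'$ with faces $\gamma$, $\beta$, and $\angle BM'C=\angle A$, which hold automatically and are strict unless $M'$ is coplanar with $A,B,C$. Your route is shorter and more conceptual, and it makes the equality case (tangent cones $\Leftrightarrow$ $M'\in\Pi_{ABC}$) immediate --- something the paper recovers only implicitly from the vanishing at $\theta_{M'}=0,\pi$.

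One correction to your closing remarks: the degeneracy locus you identify, $\Pi_{ABC}\cap T_{\angle A}$, is \emph{not} the ``two concentric circles'' excluded in Theorem~\ref{change}. That exclusion arises later in the proof of Theorem~\ref{change} from a different condition --- the angle between the intersection ray $d_{L_{\beta_{on},\gamma_{on}(u)}}$ and the toroid normal $n_{T_{\angle A}}$ equalling $\pi/2$ --- not from the two cones becoming tangent. So your worry that the lemma ``should carry the same exception'' is misplaced: the paper's statement and proof allow the non-strict inequality (hence a double ray at the planar positions), and that is all the downstream argument requires.
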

\begin{proof}
  Since $\gamma_0=(\pi-\gamma)$ is the half angle of $Cone_{A,AB,\gamma_0}$, and $\beta_0=(\pi-\beta)$ the half angle of $Cone_{A,AC,\beta_0}$, and since $\gamma_0,\beta_0 \in [0,\pi]$  by definition, then depending on the relationship among the three angles $\gamma_0,\beta_0$ and $\angle A$, $\pi-\angle A$, the necessary and sufficient condition for the two cones to intersect is:
{\tiny
\begin{equation*}
    \begin{cases}
  \beta_0+\gamma_0 \geq \angle A & \text{if}
\begin{cases}
  \beta_0 \leq \angle A\\
  \gamma_0 \leq \angle A
\end{cases}
\\
 \beta_0-\gamma_0 \leq \angle A & \text{if}
\begin{cases}
  \beta_0 \geq \angle A\\
  \gamma_0 \leq \pi-\angle A
\end{cases}
\\
 \gamma_0-\beta_0 \leq \angle A & \text{if}
\begin{cases}
  \beta_0 \leq \pi-\angle A\\
  \gamma_0 \geq \angle A
\end{cases}
\\
  \beta_0+\gamma_0 \leq 2\pi-\angle A & \text{if}
\begin{cases}
  \beta_0 \geq \pi-\angle A\\
  \gamma_0 \geq \pi-\angle A
\end{cases}
\end{cases}
\end{equation*}
}
The geometrical meaning is shown as Fig.\ref{fig9}.
\begin{figure}[t]
\begin{center}
\includegraphics[width=0.2\linewidth]{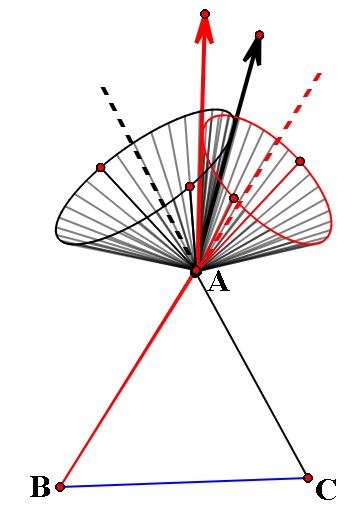}
\includegraphics[width=0.2\linewidth]{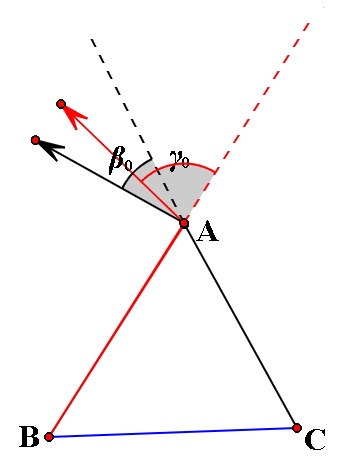}
\includegraphics[width=0.2\linewidth]{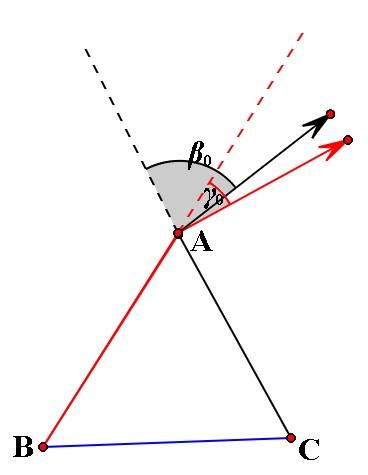}
\includegraphics[width=0.2\linewidth]{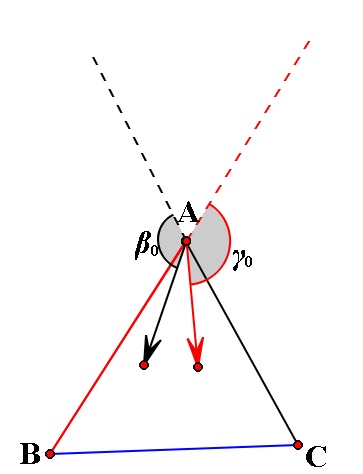}
\end{center}
   \caption{two cones must intersect at the point $A$.}
\label{fig9}
\end{figure}
The above 4 cases can be equivalently expressed as:
\begin{gather}
    \begin{cases}\label{inequa}
      \angle A\leq \beta_0+\gamma_0 \leq 2\pi-\angle A \\
      |\beta_0-\gamma_0| \leq \angle A
     \end{cases}
\end{gather}%
\begin{figure}[t]
\begin{center}
\includegraphics[width=0.3\linewidth]{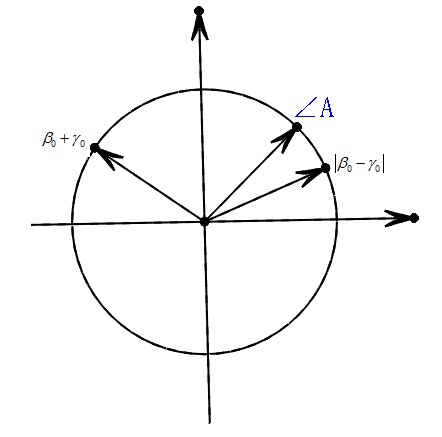}
\includegraphics[width=0.3\linewidth]{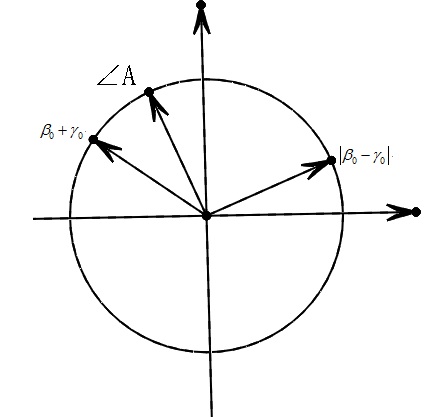}
\end{center}
   \caption{The relationship between angle $A$ and $\beta,\gamma$.}
\label{fig10}
\end{figure}
The constraints in (\ref{inequa}) can be geometrically interpreted in Fig.\ref{fig10}. The left is for $\angle CAB \leq \frac {\pi}{2}$, and The right is for $\angle CAB > \frac {\pi}{2}$. Then for both the two cases in Fig.\ref{fig10}, the constraints in (\ref{inequa}) can be simplified as:
\begin{gather}
    \label{intersect}
     cos(\beta_0+\gamma_0) \leq cos\angle A \leq  cos(\beta_0-\gamma_0)
\end{gather}
(\ref{intersect}) can be further expressed as:
\begin{gather}
\begin{split}\label{equivelent}
  cos^2\beta_0+cos^2\gamma_0-sin^2\angle A-2cos\beta_0cos\gamma_0cos\angle A\leq 0
\end{split}
\end{gather}
Since $M'$ is obtained by rotating a point $M$ on the circular arc $\widehat{BAC}$ around $BC$, suppose $M''$ is the rotated point of $M$ around $BC$ by $\pi$, then we have
 \begin{equation*} \begin{split}\label{equivelent1}
  &cos\beta= \frac{\|M'A\|^2+\|M'C\|^2-\|CA\|^2}{2\|M'A\|\|M'C\|}\\
  &cos\gamma=\frac{\|M'A\|^2+\|M'B\|^2-\|BA\|^2}{2\|M'A\|\|M'B\|}
  \end{split}
  \end{equation*}
Since $\gamma_0=\pi-\gamma$,$\beta_0=\pi-\beta$, by substituting $cos(\pi-\beta)$ and $cos(\pi-\gamma)$ into (\ref{equivelent}) and by some manipulations, we have:
{\small
\begin{gather}
\begin{split}\label{equivelent2}
 &\|M'B\|^2(\|M'A\|^2+\|M'C\|^2-\|AC\|^2 )^2 \\
&+\|M'C\|^2 (\|M'A\|^2+\|M'B\|^2 -\|AB\|^2 )^2\\
&-4 sin^2\angle A \|M'A\|^2 \|M'B\|^2 \|M'C\|^2\\
&-2\|M'C\|\|M'B\|(\|M'A\|^2+\|M'C\|^2-\|AC\|^2 )\cdot\\
&(\|M'A\|^2+\|M'B\|^2 -\|AB\|^2 )cos\angle A\leq 0
\end{split}
\end{gather}
}
When $M$ is fixed, $\|M' C\|=\|MC\|, \|M'B\|=\|MB\|$ are constant, $\|AC\|$ and $\|AB\|$ are also constant. Since only the rotated angle of $M$ around $BC$ is under change, the only varying entity in (\ref{equivelent2}) is $\|M'A\|$. Considering $t =\|M'A\|^2$ as the variable, then (\ref{equivelent2}) is a quadratic equation in $t$. The coefficient $C_2$of $t^2$ is：
\begin{equation*}
C_2=\|M'B\|^2+\|M'C\|^2-2\|M'B\|\|M'C\|cos\angle A
\end{equation*}
Since $cos\angle A\neq\pm1,C_2>0$, then (\ref{equivelent2}) is an upward parabola. In addition, since $t$ increases monotonically when rotating $M$ to $M'$ around $BC$ by $\theta_M' $for  $\theta_M' \in [0,\pi]$, the maximum of (\ref{equivelent2}) must occur at $\theta_M' =0 $ or  $\theta_M' =\pi$.However since at both $\theta_M' =0 $ or  $\theta_M' =\pi $ and (\ref{equivelent2}) is equal to zero, hence for any $\theta_M' \in [0,\pi]$, (\ref{equivelent2}) always holds. In other words, at a small neighborhood of point $A$, $Cone_{A,AC,(\pi-\beta)}$ and $Cone_{A,AB,(\pi-\gamma)}$ always intersect at two rays from point $A$.
\end{proof}
\begin{figure}[t]
\begin{center}
\includegraphics[width=0.3\linewidth]{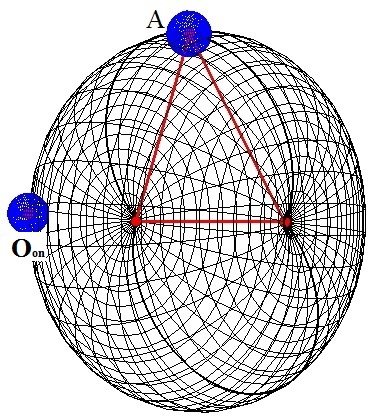}
\end{center}
   \caption{When the optical center $O$ lies on one of six special Toroids, the relationship between solution on the corresponding control point and $O$.}
\label{fig11}
\end{figure}
\textbf{Proof of Theorem 3}
\begin{proof}
 As shown in Fig.\ref{fig11}, $O_{on}$ is an arbitrary point on the toroid $T_{\angle A}$, $O_{out}$ and $O_{in}$ are two points within a small neighborhood of $O_{on}$: $\delta_{O_{on}}=\{O|\|O-O_{on}\|\leq \epsilon_{on} \}$, with $O_{out}$ being outside of $T_{\angle A}$ and $O_{in}$ inside of $T_{\angle A}$. In the next we show that if $\epsilon_{on} $ is sufficiently small, then there always exists a pair of optical center locations, say ($O_{out}',O_{in}'$) in a small neighborhood of point $A$, such that:
Either $\{O_{out}',(A,B,C)\}$ is a solution of the P3P problem$ \{O_{out},(A,B,C)\}$ and $\{O_{in}',(A,B,C)\}$ a S-solution of the P3P problem $\{O_{in},(A,B,C)\}$;
Or $\{O_{out}',(A,B,C)\}$ is a S-solution of the P3P problem $\{O_{out},(A,B,C)\}$ and $\{O_{in}',(A,B,C)\}$ a solution of the P3P problem $\{O_{in},(A,B,C)\}$.
Hence if $\epsilon_1$ is very small, or the optical center $O$ passes through $T_{\angle A}$ from $O_{out}$ to $O_{in}$ ( or from $O_{in}$ to $O_{out}$) , the solution number of the corresponding P3P problem $\{O,(A,B,C)\}$ must change exactly by 1, either increased by 1 or decreased by 1.
Since $O_{on}$ is on $T_{\angle A}$, $O_{out}$ is outside of $T_{\angle A}$, and $O_{in}$ inside of $T_{\angle A}$, we have $\angle BO_{on}C=\angle A $, $\angle BO_{out}C=\angle A_-<\angle A$ and $\angle BO_{in}C=\angle A_+>\angle A$. Then $O_{out}$ is on the toroid $T_{\angle A_-}$, $O_{in}$ is on the toroid $T_{\angle A_+}$.
Suppose $\angle AO_{on}B=\gamma_{on}$, $\angle AO_{on}C=\beta_{on}$, $\angle AO_{out}B=\gamma_{out}$, $\angle AO_{out}C=\beta_{out}$, $\angle AO_{in}B=\gamma_{in}$, $\angle AO_{in}C=\beta_{in}$, since
$\cos \beta=\frac{s_1^2+s_3^2-b^2}{2s_1s_3}$, $\cos \gamma=\frac{s_1^2+s_2^2-c^2}{2s_1s_2}$, and $s_1,s_2,s_3$ are continuously differentiable with respect to the three coordinates of the optical center, if $\epsilon_{on}$ is sufficiently small, $\triangle \beta_{out}=|\beta_{out}-\beta_{on}|$,$\triangle \beta_{in}=|\beta_{in}-\beta_{on}|$,$\triangle \gamma_{out}=|\gamma_{out}-\gamma_{on}|$, and
$\triangle \gamma_{in}=|\gamma_{in}-\gamma_{on}|$ should also be very small.  In other words, $\forall \epsilon>0, \exists \epsilon_{on}>0$, if $O_{out},O_{in}\in \delta_{O_{on}}=\{O|\|O-O_{on} \|\leq \epsilon_{on}\}$, $\triangle \beta_{out},\triangle \beta_{in},\triangle \gamma_{out}$ and $\triangle \gamma_{in} \in (0,\epsilon)$.
By Lemma \ref{Lemma4}, at a small neighborhood $\delta_A$ of point $A$, $\delta_A=\{X|\|X-A\|\leq \epsilon_A \}$,$T_{\beta_{on}}$ can approach infinitesimally to the cone  $Cone_{A,AC,\pi-\beta_{on}}$, $T_{\gamma_{on}}$ to cone  $Cone_{A,AB,\pi-\gamma_{on}}$, and by Lemma \ref{Lemma7}, the intersecting line $L_{\beta_{on},\gamma_{on}}$ of the two toroids $T_{\beta_{on}}$ and $T_{\gamma_{on}}$ can approach infinitesimally to the two intersecting rays of these two cones, $d_{L_{\beta_{on},\gamma_{on}(u)}}$ above the plane $\Pi_{ABC}$ (the control point plane), and $d_{L_{\beta_{on},\gamma_{on}(d)}}$ below  $\Pi_{ABC}$. Similarly for $T_{\beta_{out}}$ and $T_{\gamma_{out}}$, we have their two intersecting rays $d_{L_{\beta_{out},\gamma_{out}(u)}}$ and $d_{L_{\beta_{out},\gamma_{out}(d)}}$, $d_{L_{\beta_{in},\gamma_{in}(u)}}$ and $d_{L_{\beta_{in},\gamma_{in}(d)}}$ for $T_{\beta_{in}}$ and $T_{\gamma_{in}}$.
As shown in the above, since $\triangle \beta_{out},\triangle \beta_{in},\triangle \gamma_{out}$ and $\triangle \gamma_{in} \in (0,\epsilon)$, $d_{L_{\beta_{out},\gamma_{out}(u)}}$ and $d_{L_{\beta_{in},\gamma_{in}(u)}}$ should be within the cone with point $A$ being its apex, $d_{L_{\beta_{on},\gamma_{on}(u)}}$ its axis, and $\Delta \theta_\epsilon $ its half angle. Since
$\lim \limits_{\epsilon \to 0 } \Delta \theta_\epsilon =0$,  $d_{L_{\beta_{out},\gamma_{out}(u)}}$ and $d_{L_{\beta_{in},\gamma_{in}(u)}}$  can approach infinitesimally to each other. Denote the included angle of $d_{L_{\beta_{on},\gamma_{on}(u)}}$  and $n_{T_{\angle A}}$ by $\theta_{n_{T_{\angle A}},L_{\beta_{on},\gamma_{on}(u)}}$.
In the next, we show that depending on whether $ \theta_{n_{T_{\angle A}},L_{\beta_{on},\gamma_{on} (u)} }<\frac{\pi}{2}$ or $ \theta_{n_{T_{\angle A}},L_{\beta_{on},\gamma_{on} (u)} }>\frac{\pi}{2}$, there always exists a pair of optical center locations ($O_{out}',O_{in}'$) in a small neighborhood of point $A$ such that either $\{O_{out}',(A,B,C)\}$ is a solution of the P3P problem $\{O_{out},(A,B,C)\}$ and $\{O_{in}',(A,B,C)\}$ a S-solution of the P3P problem $\{O_{in},(A,B,C)\}$, or $\{O_{out}',(A,B,C)\}$ is a S-solution of the P3P problem $\{O_{out},(A,B,C)\}$ and $\{O_{in}',(A,B,C)\}$ a solution of the P3P problem $\{O_{in},(A,B,C)\}$.\\
(1) When  $ \theta_{n_{T_{\angle A}},L_{\beta_{on},\gamma_{on} (u)} }<\frac{\pi}{2}$\\
If $\epsilon_{on} $ is small enough, then for each point $O_s \in \delta_A=\{X|\|X-A\|\leq \epsilon_A \}$, the included angle $ \theta_{n_{T_{\angle A}},L_{\beta_s,\gamma_s (u)} }$ between its corresponding ray $d_{L_{\beta_s,\gamma_s(u)}}$ and $n_{T_{\angle A}}$ must satisfy:
$ \theta_{n_{T_{\angle A}},L_{\beta_s,\gamma_s (u)} }< \theta_{n_{T_{\angle A}},L_{\beta_{on},\gamma_{on} (u)} }+ \Delta \theta_{\epsilon_{on}} = \theta_{\epsilon_{on}}<\frac{\pi}{2}$.
By Lemma \ref{Lemma6}, if $\epsilon_{on}$ and $\epsilon_A$ are small enough, then the included angle, $\theta_{n_{T_{\angle A_-}}}$, between $\theta_{n_{T_{\angle A}}}$ and the segment connecting point $A$ with an arbitrary intersecting point between $T_{\angle A_-}$ and the outer border of $\delta_A=\{X|\|X-A\|\leq \epsilon_A \}$ must satisfy: $\theta_{\epsilon_{on}}<\theta_{n_{T_{\angle A_-}}}<\frac{\pi}{2}$, which indicates that $L_{\beta_{out},\gamma_{out}}$ must intersect with $T_{\angle A_-}$ at a point $O_{out}'$ in $\delta_A$, then $\{O_{out}',(A,B,C)\}$ is a solution of the P3P problem $\{O_{out},(A,B,C)\}$. At the same time, $L_{\beta_{in},\gamma_{in}}$ cannot intersect with $T_{\angle A_+}$ in $ \delta_A$ .
Denote the intersecting curve between $T_{\pi-\beta_{out}}$ and $T_{\pi-\gamma_{out}}$ by $L_{\pi-\beta_{out},\pi-\gamma_{out}}$, then $L_{\pi-\beta_{out},\pi-\gamma_{out}}$ can be infinitesimally approximated by $d_{L_{\pi-\beta_{out},\pi-\gamma_{out}(u)}}$ which is the opposite ray of $d_{L_{\beta_{out},\gamma_{out}(u)}}$. Similarly, $L_{\pi-\beta_{in},\pi-\gamma_{in}}$ can be infinitesimally approximated by $d_{L_{\pi-\beta_{in},\pi-\gamma_{in}(u)}}$ which is the opposite ray of  $d_{L_{\beta_{in},\gamma_{in}(u)}}$. By the same reasoning as in the above, we can prove in this case that $L_{\pi-\beta_{out},\pi-\gamma_{out}}$ cannot intersect with $T_{\angle A_-}$ in $\delta_A$,  but $L_{\pi-\beta_{in},\pi-\gamma_{in}}$ must intersect with $T_{\angle A_+}$ at a point $O_{in}'$ in $\delta_A$, and $\{O_{in}',(A,B,C)\}$ is a S-solution of the P3P problem $\{O_{in},(A,B,C)\}$.
In sum, if $ \theta_{n_{T_{\angle A}},L_{\beta_{on},\gamma_{on} (u)} }<\frac{\pi}{2}$, then when the optical center $O$ passes through the toroid $T_{\angle A}$ from $O_{out}$ to $O_{in}$, either a solution becomes a S-solution, or a S-solution becomes a solution for the corresponding P3P problem, hence the number of the solutions must change by 1.\\
(2) When $ \theta_{n_{T_{\angle A}},L_{\beta_{on},\gamma_{on} (u)} }>\frac{\pi}{2}$\\
The conclusion can be similarly obtained.\\
(3)  When $ \theta_{n_{T_{\angle A}},L_{\beta_{on},\gamma_{on} (u)} }=\frac{\pi}{2}$\\
We can prove that if $\angle B <\frac{\pi}{2},$ and $\angle C <\frac{\pi}{2}$, $ \theta_{n_{T_{\angle A} },L_{\beta_{on},\gamma_{on} (u)} }<\frac{\pi}{2}$ always holds. Otherwise $ \theta_{n_{T_{\angle A} },L_{\beta_{on},\gamma_{on} (u)} }=\frac{\pi}{2}$ implies that the optical center must lie on two concentric coplanar circles with the orthogonal line AB passing through their common center. In this case, the solution changes depend on also on the specific passing direction, merely specifying from inside to outside is not sufficient.
Combining the above (1), (2) and (3), Theorem 3 is proved.
\end{proof}

Before ending this section, we have additionally the following two results:
\begin{thm}\label{thm4}
 When the optical center $O$ passes through the outer surface of $\overline{T_{union}}$ from outside to inside, the number of the solutions of the corresponding P3P problem must decrease by 1.
\end{thm}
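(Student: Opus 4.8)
\textbf{Proof proposal for Theorem~\ref{thm4}.}
The plan is to reduce the statement to Theorem~\ref{change} and Lemma~\ref{no S-solution}. The one geometric ingredient I need beyond what is already proved is that, near each of its smooth points, the outer surface of $\overline{T_{union}}$ is a piece of one of the three toroids $T_{\angle A},T_{\angle B},T_{\angle C}$ rather than of a supplementary toroid: for an acute vertex one has $\overline{T_{\pi-\angle A}}\subset\overline{T_{\angle A}}$, and for the (at most one) obtuse vertex the bulky supplementary toroid is absorbed by the two toroids of the acute vertices, $\overline{T_{\pi-\angle A}}\subset\overline{T_{\angle B}}\cup\overline{T_{\angle C}}$; hence $\overline{T_{union}}=\overline{T_{\angle A}}\cup\overline{T_{\angle B}}\cup\overline{T_{\angle C}}$ and $\partial\,\overline{T_{union}}\subset T_{\angle A}\cup T_{\angle B}\cup T_{\angle C}$. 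Consequently a generic point $O_{on}$ at which the optical center leaves the exterior of $\overline{T_{union}}$ lies on exactly one of these three toroids; after relabelling the control points I may assume $O_{on}\in T_{\angle A}$ (and then automatically $\angle A\le\pi-\angle A$, since only the larger of the two toroids of a pair can bound the union).

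Just before the crossing $O$ is outside $\overline{T_{union}}$, so by Theorems~\ref{no unique} and~\ref{one to one} the problem $\{O,(A,B,C)\}$ has $n\in\{2,4\}$ solutions, and by Lemma~\ref{no S-solution} it has \emph{no} S-solution. I then invoke Theorem~\ref{change}: as $O$ passes through $T_{\angle A}$ at the generic point $O_{on}$ (hence away from the two exceptional concentric circles), the number of solutions changes by exactly $1$, and --- reading this off the proof of Theorem~\ref{change} --- the unit change occurs in one of two mutually exclusive ways: a solution lying in a small neighbourhood of control point $A$ turns into an S-solution near $A$, or an S-solution near $A$ turns into a solution near $A$.

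The direction is now forced: the second alternative would require an S-solution of $\{O,(A,B,C)\}$ immediately before the crossing, which Lemma~\ref{no S-solution} excludes. Hence the first alternative holds --- one of the $n$ solutions, the one near $A$ whose component $s_1=|OA|$ tends to $0$ on $T_{\angle A}$ (part~(2) of Lemma~\ref{Grunert real}), drops out of the solution set and re-appears as an S-solution, while the remaining $n-1$ solutions persist because Theorem~\ref{change} guarantees the net change is exactly $1$. Therefore the number of solutions decreases by exactly $1$, proving Theorem~\ref{thm4}. As an independent check through Grunert's quartic: outside $\overline{T_{union}}$ one has $A_4<0$ and $A_0<0$, so the quartic has an even number of positive roots, each a genuine solution; crossing $T_{\angle A}$ flips $A_4$ to $A_4>0$ while leaving the sign of $A_0$ unchanged, and the single root thereby driven through $v=\infty$ (the one with $|OA|\to0$) passes from the positive to the negative $v$-axis, so exactly one positive root, i.e.\ one solution, is lost.

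I expect the real obstacle to be the opening geometric claim $\overline{T_{union}}=\overline{T_{\angle A}}\cup\overline{T_{\angle B}}\cup\overline{T_{\angle C}}$ --- specifically, that the fat supplementary toroid at an obtuse vertex never protrudes from the union of the other two toroids --- and, to a lesser degree, the care needed in quoting the internal mechanism of the proof of Theorem~\ref{change} (that the changed solution is pinned to a neighbourhood of one control point and that the change is always a solution/S-solution interchange, never a spontaneous creation or annihilation). Given those, the theorem is a direct assembly of Theorems~\ref{no unique}--\ref{change} and Lemma~\ref{no S-solution}.
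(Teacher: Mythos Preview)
The paper states Theorem~\ref{thm4} (and Theorem~\ref{thm5}) without proof --- they appear immediately after the proof of Theorem~\ref{change} under the heading ``we have additionally the following two results'', and the section then ends. So there is no proof in the paper to compare against; your proposal is in fact supplying the argument the authors omit. Judged against the paper's framework, your route is exactly the intended one: Theorem~\ref{thm4} is meant to be a corollary of Theorem~\ref{change} together with Lemma~\ref{no S-solution}, and your reduction (outside $\overline{T_{union}}$ there are no S-solutions, the crossing is through one of $T_{\angle A},T_{\angle B},T_{\angle C}$, and the solution/S-solution interchange established inside the proof of Theorem~\ref{change} can therefore only go in the ``solution $\to$ S-solution'' direction) is correct and complete modulo the geometric claim you flag.

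That geometric claim --- that the outer boundary of $\overline{T_{union}}$ lies in $T_{\angle A}\cup T_{\angle B}\cup T_{\angle C}$ --- is correct and admits a short proof you may want to include rather than leave as an ``expected obstacle''. For an acute vertex the inclusion $\overline{T_{\pi-\angle A}}\subset\overline{T_{\angle A}}$ is immediate. For the (unique, if any) obtuse vertex $A$, the inclusion $\overline{T_{\pi-\angle A}}\subset\overline{T_{\angle B}}\cup\overline{T_{\angle C}}$ follows from the spherical triangle inequality at $P$: if $\angle APC<\angle B$ and $\angle APB<\angle C$ then $\angle BPC\le\angle APB+\angle APC<\angle B+\angle C=\pi-\angle A$, which is the contrapositive. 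This also confirms your parenthetical remark that the toroid actually crossed has an acute vertex angle, so Theorem~\ref{change} applies verbatim. Your quartic-coefficient ``independent check'' is a helpful heuristic but is not needed for the proof.
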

\begin{thm}\label{thm5}
When the optical center $O$ passes through toroid  $T_{\pi-\angle A}(or T_{\pi-\angle B},or T_{\pi-\angle C})$  except possibly for two concentric coplanar circles on it, the number of the solutions of the corresponding P3P problem does not change, but its S-solution changes from one to another.
\end{thm}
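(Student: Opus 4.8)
The plan is to follow the architecture of the proof of Theorem~\ref{change}, but to keep careful track of the \emph{signs} of the degenerating triplet, not merely its size: crossing $T_{\pi-\angle A}$ produces a sign pattern that can never be that of a genuine P3P solution, whereas crossing $T_{\angle A}$ (Theorem~\ref{change}) produces one that can.

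I would first localise the only possible change to a single root of Grunert's quartic. Recall from Section~2.4 that $A_4=4c^2/b^2(\cos^2\angle A-\cos^2\alpha)$ vanishes exactly when $O$ lies on $T_{\angle A}\cup T_{\pi-\angle A}$ (because $\cos^2(\pi-\angle A)=\cos^2\angle A$) and that $A_4$ changes sign as $O$ crosses $T_{\pi-\angle A}$ transversally. Hence exactly one root $v^{\ast}$ of (\ref{Grunert’s Quartic Equation}) diverges through $+\infty$ on one side of $T_{\pi-\angle A}$ and re-enters from $-\infty$ on the other, while on the generic part of $T_{\pi-\angle A}$ the coefficients $A_0,A_1,A_2,A_3$ and the quartic discriminant stay bounded away from $0$, so the other three roots stay bounded; by Lemma~\ref{Grunert real} the triplets $(s_1,s_2,s_3)$ they represent vary continuously, and a triplet can change from a solution to an S-solution (or between two different S-solutions) only when one of $s_1,s_2,s_3$ passes through $0$, which by part~(2) of Lemma~\ref{Grunert real} forces $O$ onto one of the six toroids. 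On the generic part of $T_{\pi-\angle A}$ this happens for no bounded root, so those three keep their type across $T_{\pi-\angle A}$; the only candidate for a change is $v^{\ast}$.

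The crux is then the sign of the triplet attached to $v^{\ast}$. Since $v^{\ast}=s_3/s_1\to\infty$, we have $s_1\to 0$, hence by (\ref{equ1}) $s_2\to\pm c$ and $s_3\to\pm b$; the third equation of (\ref{equ1}) together with $\alpha=\pi-\angle A$ on $T_{\pi-\angle A}$ and the law of cosines $c^2+b^2-2cb\cos\angle A=a^2$ forces $s_2 s_3\to -cb$, i.e.\ $s_2 s_3<0$ --- precisely the opposite of the $T_{\angle A}$ case in Theorem~\ref{change}, where the same limit gives $s_2 s_3>0$. Therefore, for $O$ on either side of $T_{\pi-\angle A}$ and close to it, after the global sign normalisation of Section~2.2 the triplet of $v^{\ast}$ has exactly one negative entry: it is an S-solution, never a solution, so the number of genuine solutions does not change across $T_{\pi-\angle A}$. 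Moreover $v^{\ast}$ leaves through $+\infty$ and returns from $-\infty$, so $\mathrm{sgn}(s_3/s_1)$ flips; combining this with $s_2 s_3<0$, the limiting degenerate triplet is $(0,-c,b)$ on one side and $(0,c,-b)$ on the other, so (running Lemmas~\ref{Lemma4}--\ref{Lemma7} for the \emph{supplementary} cones at the control point, exactly as in the proof of Theorem~\ref{change}, with $d_{L_{\beta,\gamma}}$ replaced by its opposite ray where needed) this S-solution localises near a control point and is an S-solution of the $(\alpha,\gamma)$-supplementary problem on one side and of the $(\alpha,\beta)$-supplementary problem on the other. That is exactly the assertion that the number of solutions is unchanged while ``its S-solution changes from one to another''.

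Finally I would dispose of the exceptional set. As in the proof of Theorem~\ref{change}, the transversal-crossing step degenerates precisely when the included angle $\theta_{n_{T_{\pi-\angle A}},L_{\beta_{on},\gamma_{on}(u)}}$ equals $\pi/2$, which confines the exceptional points to two concentric coplanar circles on $T_{\pi-\angle A}$; in addition, the finitely many circles along which $T_{\pi-\angle A}$ meets another of the six toroids (so that a second coefficient of the quartic, or its discriminant, vanishes simultaneously) must be handled separately but can be absorbed into the stated exception. The main obstacle is the same one met in Theorem~\ref{change}: making the local cone approximation rigorous --- here for the supplementary cones $Cone_{A,AC,(\pi-\beta_0)}$, $Cone_{A,AB,(\pi-\gamma_0)}$ and their opposite rays --- and identifying, on each side of $T_{\pi-\angle A}$, the precise supplementary problem that acquires the new S-solution; the sign bookkeeping above is exactly what keeps that identification tractable.
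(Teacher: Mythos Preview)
The paper states Theorem~\ref{thm5} without proof (it is listed, together with Theorem~\ref{thm4}, as an ``additional result'' at the end of Section~3 with no argument supplied), so there is no paper proof to compare against directly. Your proposal must be assessed on its own merits and against the geometric template the paper uses for the companion Theorem~\ref{change}.

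Your route is genuinely different from that template, and in a useful way. The paper's proof of Theorem~\ref{change} is entirely geometric: it approximates $T_\beta,T_\gamma$ by cones near the control point $A$ (Lemmas~\ref{Lemma4}--\ref{Lemma7}) and checks on which side of $T_{\angle A}$ their intersection ray meets $T_{\angle A_\pm}$. You instead work algebraically through Grunert's quartic, isolating the single diverging root $v^\ast$ via the vanishing of $A_4$ and then reading off the limiting sign pattern of $(s_1,s_2,s_3)$ directly from~(\ref{equ1}). Your key observation --- that on $T_{\pi-\angle A}$ the third equation of~(\ref{equ1}) together with the law of cosines forces $s_2 s_3\to -bc<0$, whereas on $T_{\angle A}$ the same computation gives $s_2 s_3\to +bc>0$ --- is the cleanest possible explanation of \emph{why} the two toroids behave differently, and it is not made explicit anywhere in the paper. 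This buys you a much shorter and more transparent argument than re-running the full cone machinery would; the gesture toward Lemmas~\ref{Lemma4}--\ref{Lemma7} at the end is really only needed to pin down geometrically \emph{where} the S-solution sits, not whether it is one.

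Two points deserve tightening. First, your claim that the three bounded roots keep their type invokes Lemma~\ref{Grunert real}(2), but $O$ \emph{is} on one of the six toroids at the crossing, so you should say explicitly that $s_1=0$ for a triplet forces $v=s_3/s_1=\infty$, hence can occur only for the diverging root; the bounded roots can have $s_2=0$ or $s_3=0$ only on $T_{\pi-\angle A}\cap(T_{\angle B}\cup T_{\pi-\angle B}\cup T_{\angle C}\cup T_{\pi-\angle C})$. Second, and relatedly, you assert that these intersection curves ``can be absorbed into the stated exception'', but the two concentric circles in the theorem come from the $\theta=\pi/2$ degeneracy of Theorem~\ref{change}, not from toroid intersections; these are in general distinct loci, so either show the inclusion or flag that the true exceptional set may be slightly larger than the theorem claims --- the paper, having no proof, offers no help here. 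Finally, your sign argument divides by $\cos\angle A$; when $\angle A=\pi/2$ the toroids $T_{\angle A}$ and $T_{\pi-\angle A}$ coincide and the statement degenerates, which is worth a remark.
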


\section{Conclusion}
In this work, some new insights of the multi-solution phenomenon of the P3P problem are obtained. We show that given 3 control points, if the optical center is outside of the 6 toroids defined by the 3 control points, the corresponding P3P problem cannot have a unique solution, and  a one-to-one root-solution correspondence does exist. In addition we show that three of the 6 toroids must act as some of the critical surfaces, or when the optical center passes through any one of these three toroids except possibly for some special circular curves, the solution number of the corresponding P3P problem must change exactly by one. In addition if the toroid is part of the outer surface of the union of the 6 toroids, the number of the solutions must decrease by one if the passing of the optical center is from the outside to inside, and increase by one if the passing is in the opposite direction. These new findings, in addition to their academic values, could also act as some theoretical guidance for P3P practitioners.\\
{\small
\bibliographystyle{ieee}
\bibliography{egbibsecond}
}
\end{document}